\newtheorem{theorem}{Theorem}
\definecolor{codegreen}{rgb}{0,0.6,0}
\definecolor{codegray}{rgb}{0.5,0.5,0.5}
\definecolor{codepurple}{rgb}{0.58,0,0.82}
\definecolor{backcolour}{rgb}{0.95,0.95,0.92}
\lstdefinestyle{mystyle}{
    backgroundcolor=\color{backcolour},   
    commentstyle=\color{codegreen},
    keywordstyle=\color{magenta},
    numberstyle=\tiny\color{codegray},
    stringstyle=\color{codepurple},
    basicstyle=\ttfamily\footnotesize,
    breakatwhitespace=false,         
    breaklines=true,                 
    captionpos=b,                    
    keepspaces=true,                 
    numbers=left,                    
    numbersep=5pt,                  
    showspaces=false,                
    showstringspaces=false,
    showtabs=false,                  
    tabsize=2
}
\newcounter{myconjecture}
\newtheorem{conjecture}[myconjecture]{Conjecture}
\newcounter{mycorollary}
\newtheorem{corollary}[mycorollary]{Corollary}
\newtheorem{proposition}{Proposition}
\theoremstyle{definition}
\newtheorem{definition}{Definition}[section]
\newcounter{mylemma}
\newtheorem{lemma}[mylemma]{Lemma}
\newtheorem{remark}{Remark}
\newcommand{\localCyclic}{\mathbb{Z}_\mathcal{I}}
\newcommand{\localDihedral}{D_\mathcal{I}}
\newcommand{\localSymmetric}{S_\mathcal{I}}
\newcommand{\localFCyclic}[1]{\mathbb{Z}_{#1}}
\newcommand{\localFDihedral}[1]{D_{2 #1}}
\newcommand{\localFSymmetric}[1]{S_{#1}}
\begin{document}

%

%

\twocolumn[

\aistatstitle{A Unified Framework for Discovering Discrete Symmetries}

\aistatsauthor{Pavan Karjol \And Rohan Kashyap \And  Aditya Gopalan \And Prathosh A.P.}

\aistatsaddress{Department of Electrical Communication Engineering, \\ Indian Institute of Science, India} ]

\begin{abstract}
We consider the problem of learning a function respecting a symmetry from among a class of symmetries. We develop a unified framework that enables symmetry discovery across a broad range of subgroups including locally symmetric, dihedral and cyclic subgroups. At the core of the framework is a novel architecture composed of linear, matrix-valued and non-linear functions that expresses functions invariant to these subgroups in a principled manner. The structure of the architecture enables us to leverage multi-armed bandit algorithms and gradient descent to efficiently optimize over the linear and the non-linear functions, respectively, and to infer the symmetry that is ultimately learnt. We also discuss the necessity of the matrix-valued functions in the architecture. Experiments on image-digit sum and polynomial regression tasks demonstrate the effectiveness of our approach.
\end{abstract}

\section{Introduction}
It is well known that machine learning tasks often exhibit natural symmetries. As a result, the function to be learnt, say in a classification or regression setting, possesses additional structure in terms being invariant or equivariant to the underlying symmetry. Being able to exploit symmetry structure in the training pipeline confers benefits such as improved sample complexity, added explainability, fewer model parameters and improved generalizability. A classic case in which symmetry is leveraged is the convolutional neural network (CNN) architecture \citep{lecun1995convolutional} that intrinsically expresses equivariance to translations of input images in classification tasks.


A growing body of work has addressed the problem of incorporating known symmetries into the learning pipeline, either via augmenting data using the symmetry structure \citep{benton2020learning} or designing neural nets that inherently express functions with known symmetries \citep{zaheer2017deep, kicki2020computationally}. Consequently, it is known how to design architectures with $n$ inputs that are, say, invariant to arbitrary permutations of the input variables, or equivalently, neural functions that are $S_n$-invariant where $S_n$ is the group of permutations on $n$ elements \citep{dummit2004abstract}.

However, there are often settings in which the target function possesses a symmetry which is a priori {\em unknown}, but known to belong to a class of possible symmetries (subgroups of $S_n$). We are interested in the problem of discovering such an unknown symmetry automatically from data. Consider, for instance, data representing measured states of a system of multiple particles (e.g., positions, velocities, etc.), with the target function representing a physical quantity of interest depending on the state, such as potential energy. If only $k$ of the $n$ particles (whose identities are unknown) actually interact with each other (maybe because they are the only charged particles), then the net energy is invariant to permutations of the positions of this subset of particles alone. Here, the target function exhibits invariance with respect to the subgroup of permutations $S_k$ associated to the position indices of these $k$ particles, which are not known upfront. On the other hand, the system's kinetic energy is unchanged under permutations of the subset of velocity parameters of the system state. In general, when the semantics of the target function and/or the input variables are unknown, then so is the underlying symmetry. A similar problem arises in computer vision as that of learning a classifier that can detect patterns or objects in an image while being invariant to local transformations or symmetries applied to specific regions or parts of the image \citep{lazebnik2004semi,felzenszwalb2009object}. 

We consider the problem of learning a function $f: X \rightarrow Y$, given data $\big \{ \left(x^{(u)}, y^{(u)} \right) \big \}_{u=1}^m$ and a collection of non-trivial subgroups\footnote{Restricting to subgroups of $S_n$ is justified by the fact that any finite group is isomorphic to a subgroup of $S_{n}$ for some $n$ by Cayley's theorem \citep{dummit2004abstract}.} of $S_n$, one of which $f$ is invariant with respect to (i.e., $f \circ g \equiv f$ for every transformation $g$ in some subgroup of $S_n$). %
For a sufficiently rich collection of possible symmetry subgroups\footnote{In general, if we consider {\em all} subgroups of $S_n$, then the problem of learning a specific symmetry is known to be computationally intractable \citep{ensign2020complexity}.}, we provide a unified and easy-to-use framework comprising of a parametric architecture together with algorithms to tune it and learn the underlying symmetry (subgroup). Our specific contributions are presented in the following subsection.

\subsection{Contributions}
\begin{itemize}
    \item We introduce a general framework for discovering a variety of discrete symmetries. Our framework allows for efficiently learning functions that can be invariant to {\em any} locally symmetric, dihedral or cyclic subgroup using the same architecture.     
    
    \item The unified architecture that forms the backbone of our framework is comprised of a novel combination of (learnable) linear, matrix-valued and non-linear functions. We explicitly characterize the structure of both these transformations, in particular showing how they correspond to a variety of subgroups. To the best of our knowledge, this is the first unified framework to discover a wide range of discrete symmetries.
    
    \item Leveraging the specific structure of the linear transformations in our unified architecture, we devise an efficient training algorithm based on multi-armed bandits (for discrete optimization over matrices representing the learnable linear part) along with stochastic gradient descent (for continuous optimization over the nonlinear part). The bandit sampling allows for efficient search across the entire family of matrices associated to various symmetries, and, with our structural characterization, allows for interpretable results. 
\end{itemize}

Note that, the goal of our paper is to propose a unified architecture for the discovering the underlying discrete subgroup. Thus, we argue that after the discovery of the \textit{correct} symmetry using our framework, one could in practice utilize any off-the-shelf models \citep{kicki2020computationally, zaheer2017deep, yang2023generative} to improve the model accuracy.

\subsection{Related Work}
\subsubsection{Group Equivariance} The utilization of symmetries in deep learning has garnered significant research interest in recent years \citep{bronstein2021geometric, dehmamy2021automatic}. Within this context, \cite{cohen2016group} introduced $G$-equivariant neural networks as an extension of Convolutional Neural Networks (CNNs) to encompass a broader range of symmetries. Furthermore, \cite{kondor2018generalization} establish convolution formulae in a more general setting, i.e., invariance under the action of any compact group and \cite{cohen2019general} delve into the application of $G$-CNNs on homogeneous spaces using equivariant linear maps. 

\subsubsection{Discrete Groups} 
The study of invariance to finite groups has received considerable attention in the existing literature.  \cite{kicki2020computationally} proposed an approach that utilizes invariant polynomials to design $G$-invariant neural networks  $f : X \rightarrow \mathbb{R}$, where $X$ is a compact subset of $R^n$, achieved through a combination of a $G$-equivariant transformation block and the sum-product layer. They demonstrate the universality of their approach for larger and hierarchical subgroups of $S_{n}$. In a different approach, \cite{zaheer2017deep} introduced permutation-equivariant functions defined on sets using a decomposable representation expressed as $\rho \left( \sum_i \phi \left (x_i \right) \right)$. Motivated by these, we consider invariance under the action of subgroups of $ G \leq S_{n}$, when the underlying subgroup is unknown.

\subsubsection{Automatic Symmetry Discovery} \cite{dehmamy2021automatic} presents a Lie algebra convolution network (L-conv) for constructing feedforward architectures that exhibit equivariance to arbitrary continuous groups. \cite{benton2020learning} propose a different approach by parameterizing a distribution over training data augmentations, while \cite{zhou2020meta} introduce a meta-learning framework that addresses symmetries through the reparameterization of network layers. Building upon the idea of establishing invariant symmetry-adapted data representations, \cite{anselmi2019symmetry} investigates the use of regularization on the representation matrix for unsupervised orbit learning.

Recently \cite{yang2023generative} proposed LieGAN, which is based on generative adversarial approach to discover the underlying subgroup. However, most of the existing methods emphasize on continuous group symmetries. In this work, we propose a similar solution for discrete group symmetries. In particular, we demonstrate that a unified architecture can be used for arbitrary symmetry discovery ($\{ \localCyclic, \localDihedral, \localSymmetric \}$) using a multi-armed bandits setting which aids in identifying the exact symmetry learned as discussed in Section \ref{proposed framework} and \ref{Experiments} respectively.

\section{Proposed Method}
\label{proposed framework}
\subsection{Mathematical Preliminaries}
The group $S_n$ is the set of all permutations on $n$ elements along with the natural group multiplication (composition) and inverse operations. By a {\em symmetry} we mean a subgroup $G \leq S_{n}$; all groups used henceforth are assumed to be of this form. The group generated by an element $g$ is $\langle g \rangle = \{g, g^2, g^3, \ldots \}$. We use $f \circ g$ to denote function composition: $(f \circ g) (x) = f(g(x))$.

\begin{definition}
    Let $\mathcal{I} = \{i_1, \ldots, i_k \} \subset [n]$ be an index set with $i_1 < \cdots < i_k$. 
    \begin{itemize}
        \item $\localCyclic$ is the locally cyclic group corresponding to $\mathcal{I}$, generated by the permutation $\pi \in \localFSymmetric{n}$ such that $\pi(i) = i_{\tau(j)}$ if $i = i_j$ and $\pi(i) = i$ otherwise. Here, $\tau(j) = (j \mod n) + 1$ denotes the cyclic shift operator. 
        \item $\localDihedral$ is the locally dihedral group corresponding to $\mathcal{I}$, defined as $\{\pi, \pi^2, \ldots, \sigma \pi, \sigma \pi^2, \dots \}$, where $\pi \in \localFSymmetric{n}$ is as defined above and $\sigma \in \localFSymmetric{n}$ is defined by $\sigma \left( i_l \right) = \sigma \left( i_{k-l+1} \right)$  $\forall l \in [k]$ (reflection about the center of $\mathcal{I}$).
        \item $\localSymmetric$ is the locally symmetric group corresponding to $\mathcal{I}$, consisting of all permutations that move elements only within $\mathcal{I}$, i.e., $\localSymmetric = \{ \pi \in S_n: \pi(j) = j \, \, \forall j \notin \mathcal{I} \}$.
        \item $\localFCyclic{k} = \localCyclic$; $\quad$   $\localFDihedral{k} = \localDihedral $; $\quad$  $\localFSymmetric{k} = \localSymmetric$  with $\mathcal{I} = [k]$ (the first $k$ elements of  $[n]$).
    \end{itemize}
    \label{def:Z_I}
\end{definition}

 \begin{definition} 
Let $g \in S_n$. The action of $g$ on $\mathbb{R}^n$ is the map $x \mapsto g\cdot x$ given by $(g \cdot x)_i = x_{g(i)}$ $\forall i \in [n]$.
\end{definition}

\begin{definition} 
The orbit of $x \in X$ under the action of group $G$ is  defined as $\mathcal{O}_G(x) = \{g \cdot x| g \in G \}$.
\end{definition}

\begin{definition} 
A function  $f : X \rightarrow \mathbb{R}$ is said to be $G$-invariant, if $f(x) = f(g \cdot x), \forall g \in G, x \in X$. 
\end{definition}

\begin{definition}
Let $X, Y \subseteq \mathbb{R}^n$. A function  $f : X \rightarrow Y$ is said to be $G$-equivariant, if for any $g \in G$,  $\exists$ $\Tilde{g} \in G$,  $f(g \cdot x) = \Tilde{g} \cdot f( x),   \forall x \in X$.
\end{definition}

\subsection{Problem statement}
Let $X = [0,1]^n \setminus E$  denote the input (instance) domain, where $E = \{[x_1, x_2 \dots , x_n]^T \in [0,1]^n : x_i = x_j \text{ for some } i, j \in [n] \text{ with } i \neq j \}$. Note that the $n$-dimensional measure of the set $E$ is zero. We frame the symmetry discovery problem as follows:\\ Given data $\big \{ \left(x^{(u)}, y^{(u)} \right) \big \}_{u=1}^m$ with $x^{(u)} \in X, y^{(u)} \in \mathbb{R}$, and the collection of non-trivial subgroups  $\mathcal{G} = \cup_{\mathcal{I} \subseteq [n], |\mathcal{I} | > 1} \{ \localCyclic, \localDihedral, \localSymmetric \}$, we aim to learn a function $f: X \to \mathbb{R}$ such that $f$ is $G$-invariant for some $G \in \mathcal{G}$ with respect to the data. 
Specifically, we wish to efficiently solve the following empirical risk minimization (ERM) problem,
\begin{equation}
\label{eq:ERM}
    \arg\min_{f \in \mathcal{F}(\mathcal{G})} \frac{1}{m} \sum_{u=1}^m \ell \left(y^{(u)}, f\left(x^{(u)} \right) \right),
\end{equation}
where the hypothesis class $\mathcal{F}(\mathcal{G})$ is comprised of all functions that are $G$-invariant for some $G \in \mathcal{G}$, i.e., $\mathcal{F}(\mathcal{G}) = \{f: X \to \mathbb{R}: \exists G \in \mathcal{G} \text{  s.t. $f$ is $G$-invariant} \}$, and $\ell$ stands for a loss function such as squared or absolute error loss.


\begin{figure}[htp!]
    \centering
    \includegraphics[width= \columnwidth, keepaspectratio, draft=false]{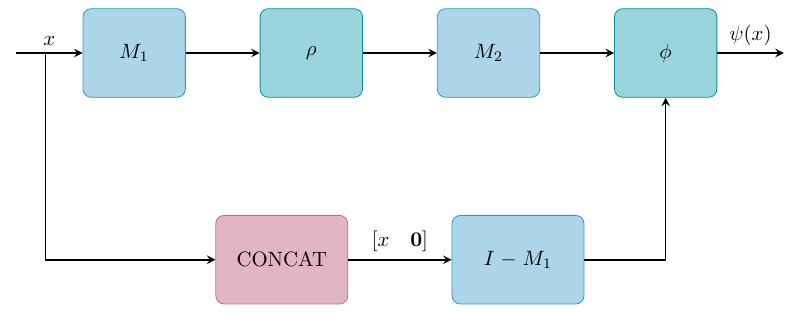}
    \caption{Proposed unified architecture for discovering symmetries, composed of linear transformations ($M_1$, $M_2$), matrix-valued ($\rho$) and non-linear function ($\phi$). $\rho$ is explicitly fixed whereas $M_1, M_2$ and $ \phi$ are trainable. Theorem \ref{Main framework} guarantees that the architecture can express functions invariant to any locally symmetric, dihedral and cyclic. Here, $\phi$ is represented by a neural network and trained using gradient descent while $M_1, M_2$ are optimized using bandit sampling over a discrete space of matrices.}
    \label{fig:diagram}
\end{figure}

\subsection{Proposed framework}
We aim to develop a framework for solving the symmetry discovery problem defined above in the problem statement. It is not a priori clear how to efficiently search over the function class $\mathcal{F}(\mathcal{G})$ -- observe that $\mathcal{G}$ is an exponentially large (in $n$) set of subgroups. 

Our solution strategy is based on finding a standard decomposition for any function $\psi$ in the function class $\mathcal{F}(\mathcal{G})$. To this end, we first consider each type of subgroup individually and prove a structural decomposition of the form $\psi = \phi \circ \rho$ for any $\psi$ which is invariant to that group. We then design a single decomposition of the form $\phi \circ M_2 \circ \rho \circ M_1$ that effectively integrates all the individual decompositions. 

Our first result shows that any $\localFCyclic{k}$-invariant function can be expressed as a composition of an $\localFSymmetric{k}$-invariant function and a specific matrix-valued function.

\begin{restatable}{theorem}{thmZkSk}
\label{Z_k:S_k}
    Let $\psi: [0,1]^k \rightarrow \mathbb{R} $ be $\localFCyclic{k}$-invariant. There exists an $\localFSymmetric{k}$-invariant function $\phi : [0,1]^{k \times 2} \rightarrow \mathbb{R}$ and $\rho: [0,1]^k \rightarrow [0,1]^{k \times 2}$,  such that 
    \begin{equation}
        \label{thm1:e1}
        \psi = \phi \circ \rho,
    \end{equation}
    where $\rho$ is defined as, 
    \begin{align}
        \begin{bmatrix}
        x_1 \\
        \vdots \\
        x_k
        \end{bmatrix} 
        &\mapsto
        \begin{bmatrix}
        x_1 &x_2 \\
        x_2 &x_3 \\
        \vdots &\vdots\\
        x_k &x_1
        \end{bmatrix}
        \label{rho def Z_k}
    \end{align}
    
\end{restatable}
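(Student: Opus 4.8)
The plan is to exploit the fact that $\localFCyclic{k}$-invariance of $\psi$ is exactly invariance under cyclic rotation of the coordinates of $x$, and that the rows of $\rho(x)$ are precisely the cyclically-consecutive pairs $(x_i,x_{i+1})$ with indices read mod $k$. The first observation is that applying the generator $\pi$ to $x$ merely cycles the rows of $\rho(x)$: since $(\pi\cdot x)_i = x_{i+1}$, the matrix $\rho(\pi\cdot x)$ has the same rows as $\rho(x)$ in a shifted order, hence the same \emph{multiset} of rows. Because an $\localFSymmetric{k}$-invariant $\phi$ is by definition a function of the multiset of the $k$ rows of its matrix argument, this is consistent with requiring $\phi\circ\rho$ to be $\localFCyclic{k}$-invariant, and it tells me that the only freedom I have in building $\phi$ is as a function of that row-multiset.

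The heart of the argument is a reconstruction lemma: if $x$ has pairwise distinct coordinates, then the multiset $\{(x_i,x_{i+1})\}_{i=1}^{k}$ determines $x$ up to a cyclic shift. I would prove this by defining a \emph{successor map} on the values of $x$. Since the coordinates are distinct, each value occurs exactly once as a first coordinate of a row, so the row $(v,\cdot)$ is unique for each value $v$ and its second coordinate is the well-defined successor $s(v)$; this map depends only on the multiset. The map $s$ is a permutation of the $k$ values (each value also occurs exactly once as a second coordinate), and for a multiset that actually arises as $\rho(x)$ it is a single $k$-cycle, namely $x_1 \mapsto x_2 \mapsto \cdots \mapsto x_k \mapsto x_1$. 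Iterating $s$ from any value therefore traverses all $k$ values and recovers the cyclic order, the only ambiguity being the choice of starting index. Hence any two distinct-coordinate inputs with the same row-multiset differ by a cyclic shift.

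With the lemma in hand I define $\phi$ directly on row-multisets, which makes it $\localFSymmetric{k}$-invariant automatically. For a matrix $M$ whose row-multiset coincides with that of $\rho(x)$ for some distinct-coordinate $x$, set $\phi(M):=\psi(x)$; on all remaining matrices set $\phi(M):=0$ (the value is irrelevant). This is well defined: if two representatives $x,x'$ yield the same row-multiset then, by the lemma, $x' = \pi^{j}\cdot x$ for some $j$, and $\localFCyclic{k}$-invariance of $\psi$ gives $\psi(x')=\psi(x)$. By construction $\phi(\rho(x))=\psi(x)$ for every distinct-coordinate $x$, establishing $\psi=\phi\circ\rho$, while $\phi$ is $\localFSymmetric{k}$-invariant because it depends only on the multiset of rows.

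The main obstacle is the reconstruction lemma together with the degenerate inputs where two coordinates coincide. When values repeat, the successor map need no longer be single-valued and the associated edge-multigraph can admit several Eulerian circuits that are not cyclic shifts of one another, so the multiset of consecutive pairs no longer pins down $x$ up to rotation and the recipe for $\phi$ can fail to be consistent there. I would sidestep this by working on the subset of $[0,1]^{k}$ with pairwise distinct coordinates, on which the lemma applies and which carries full Lebesgue measure; this mirrors the problem statement's choice of domain $X=[0,1]^n\setminus E$, and since the ERM objective in \eqref{eq:ERM} evaluates $f$ only at data points with distinct coordinates, defining $\phi$ arbitrarily on the measure-zero degenerate set costs nothing. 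The one point I would verify carefully is that the successor argument produces a single $k$-cycle rather than a union of shorter cycles, which holds precisely because $\rho$ encodes one cyclic traversal of all $k$ coordinates.
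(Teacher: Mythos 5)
Your proposal is correct and follows the same overall strategy as the paper's proof: both associate to each input $x$ the $\localFSymmetric{k}$-orbit (equivalently, the multiset of rows) of $\rho(x)$, show that this datum determines the $\localFCyclic{k}$-orbit of $x$, and then define $\phi$ orbitwise by transporting the values of $\psi$ (with arbitrary values off the image of $\rho$). The differences are in how the reconstruction step is proved and in the treatment of degenerate inputs. The paper proves its Step 3 (that $g \cdot \rho(x) \in \mathrm{Im}(\rho)$ forces $g \cdot \rho(x) = h \cdot \rho(x)$ for some $h \in \localFCyclic{k}$) by applying $\rho^{-1}$ and matching the two expressions for the rows, $(x_{g(i)}, x_{\tau(g(i))})$ versus $(x_{g(i)}, x_{g(i+1)})$; your successor-map argument is the same chaining idea, repackaged as the statement that the rows induce a single $k$-cycle on the values, which is arguably cleaner. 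More substantively, you make explicit that matching \emph{values} only forces matching of \emph{indices} when the coordinates are pairwise distinct, and you correctly identify the failure mode on $E$: with repeated values the pair multiset is the edge multiset of a multigraph that can admit several Eulerian circuits which are not rotations of one another. For instance, with $k=5$, $x = (a,b,a,c,a)$ and $y = (a,c,a,b,a)$ have equal row multisets but lie in different $\localFCyclic{5}$-orbits, so an $\localFSymmetric{5}$-invariant $\phi$ cannot separate $\rho(x)$ from $\rho(y)$ even though a $\localFCyclic{5}$-invariant $\psi$ may. The paper's proof uses distinctness silently at exactly this point (its $\localFDihedral{k}$ analogue, Theorem \ref{D2k_S2k}, invokes uniqueness of the $x_i$ explicitly), states Theorem \ref{Z_k:S_k} on all of $[0,1]^k$, and even claims in the appendix that excluding $E$ is unnecessary for $\localFCyclic{k}$-invariance; your restriction to inputs with distinct coordinates is therefore not a weakening but a needed precaution, consistent with the domain $X = [0,1]^n \setminus E$ used in the paper's problem statement.
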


\textit{Proof. (Sketch)} The  $\localFCyclic{k}$-invariant function $\psi$ must assign the same value to every element of any $\localFCyclic{k}$-orbit. We show that any such orbit $\mathcal{O}_{\localFCyclic{k}}(x)$ can be uniquely associated with the corresponding $\localFSymmetric{k}$-orbit $\mathcal{O}_{\localFSymmetric{k}}(\rho(x))$. From this, it follows that by defining the $\localFSymmetric{k}$-invariant function $\phi$ to take the same value across any orbit of the form $\mathcal{O}_{\localFSymmetric{k}}(\rho(x))$ as $\psi$ does across the orbit $\mathcal{O}_{\localFCyclic{k}}(x)$ (and an arbitrary value across orbits not of the form $\mathcal{O}_{\localFSymmetric{k}}(\rho(x))$), we obtain the result.

We also assess the regularity conditions such as smoothness ($C^{\infty}$) and continuity ($C^0$) of the $\psi$ and $\phi$ function, and in this regard we state the following theorem.
\begin{theorem}
    \label{regularity}
    Under the same hypothesis of Theorem \ref{Z_k:S_k}, the $\phi$ function is smooth ($C^\infty$) whenever $\psi$ function is $C^\infty$. Similarly, the $\phi$ function is continuous ($C^0$) whenever $\psi$ function is $C^0$.
\end{theorem}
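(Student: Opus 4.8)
\textit{Proof proposal.} The plan is to exploit the fact that the map $\rho$ of Theorem~\ref{Z_k:S_k} is \emph{linear} and admits an explicit linear left inverse, so that the regularity of $\phi$ on the image of $\rho$ comes for free, and then to upgrade this to a globally defined $\localFSymmetric{k}$-invariant function of the same regularity class. Concretely, writing $L:[0,1]^{k\times 2}\to[0,1]^k$ for the projection onto the first column, the definition \eqref{rho def Z_k} gives $L(\rho(x))=x$ for every $x$, and $L$ is linear, hence $C^\infty$. Thus on $\mathrm{Im}(\rho)$ the value forced by \eqref{thm1:e1}, namely $\phi(\rho(x))=\psi(x)=\psi(L(\rho(x)))$, is just the restriction of $\psi\circ L$ and inherits the regularity of $\psi$ directly. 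The only genuine issue is therefore to produce a single $\localFSymmetric{k}$-invariant extension off the image that preserves both this regularity and the identity $\phi\circ\rho=\psi$.

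First I would define $\phi$ on the image of the distinct-coordinate locus and its $\localFSymmetric{k}$-translates by $\phi(g\cdot\rho(x)):=\psi(x)$. On this locus the orbit correspondence of Theorem~\ref{Z_k:S_k} is a genuine bijection, so $\phi$ is well defined and $\localFSymmetric{k}$-invariant there, and on each translated stratum it agrees with $\psi\circ L\circ g^{-1}$, hence is as regular as $\psi$. I would then extend $\phi$ to the closed, $\localFSymmetric{k}$-stable saturated set $D:=\localFSymmetric{k}\cdot\mathrm{Im}(\rho)$ (a union of finitely many translated subspaces intersected with the cube), thereafter to all of $[0,1]^{k\times 2}$ using the Tietze extension theorem in the $C^0$ case and the Whitney extension theorem in the $C^\infty$ case, and finally symmetrize the extension $\widetilde\phi$ by setting $\phi^\star(M)=\frac{1}{k!}\sum_{g\in\localFSymmetric{k}}\widetilde\phi(g\cdot M)$. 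Because $D$ is $\localFSymmetric{k}$-stable and $\phi|_D$ is already invariant, this average reproduces $\phi$ on all of $D$ -- in particular $\phi^\star\circ\rho=\psi$ -- while rendering $\phi^\star$ globally $\localFSymmetric{k}$-invariant, and averaging a finite family preserves both $C^0$ and $C^\infty$.

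It remains to control $\phi$ across the images of the coincidence locus (vectors with repeated coordinates), which is where the strata meet and where the orbit correspondence degenerates. For the $C^0$ claim I would argue continuity on $D$ by a compactness argument: a convergent sequence in $D$ may, after passing to a subsequence and using finiteness of $\localFSymmetric{k}$, be taken with a fixed group element, whereupon continuity of the linear inverse $L$ forces the underlying vectors to converge, and continuity of $\psi$ transports this to convergence of the $\phi$-values; the limit is single-valued, which also resolves any apparent ambiguity of the defining formula at coincidence points. The hard part, and the main obstacle, is the $C^\infty$ case, where one must exhibit a Whitney-compatible jet for $\phi$ on $D$. The difficulty is localized at the mutual intersections $g\cdot\mathrm{Im}(\rho)\cap g'\cdot\mathrm{Im}(\rho)$, which lie over the coincidence locus; there the jets inherited from the individual strata agree only in their tangential components. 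The key step is to show that the tangential constraints imposed along each intersection are mutually consistent -- they all reduce to the derivatives of $\psi$ along the corresponding coincidence set -- so that a common jet can be selected, and then to verify the Whitney Taylor-remainder estimates, after which the extension and symmetrization above conclude the proof.
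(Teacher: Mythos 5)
Your first paragraph is, modulo packaging, the paper's entire proof. The paper invokes Lemma \ref{diffeomorphism} ($\rho$ is a diffeomorphism onto its image) and writes $\phi = \psi \circ \rho^{-1}$ on $\mathrm{Im}(\rho)$; your observation that the linear projection $L$ onto the first column satisfies $L \circ \rho = \mathrm{id}$, so that $\phi|_{\mathrm{Im}(\rho)} = \psi \circ L$, is the same argument and in fact cleaner, since $L$ is globally defined and linear and no diffeomorphism lemma is needed. Had you stopped there -- reading the theorem, as the paper implicitly does, as a statement about the regularity of $\phi$ on $\mathrm{Im}(\rho)$ (equivalently, the existence of \emph{some} regular, not necessarily invariant, extension such as $\psi \circ L$) -- you would have reproduced the published proof. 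Everything after that point is you attempting to prove a strictly stronger statement the paper never addresses: that the globally defined, $\localFSymmetric{k}$-invariant $\phi$ of Theorem \ref{Z_k:S_k} can itself be chosen $C^0$/$C^\infty$ on all of $[0,1]^{k \times 2}$.

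That strengthened program has two genuine gaps. First, the one you flag yourself: in the $C^\infty$ case the Whitney jet-compatibility across the intersections $g \cdot \mathrm{Im}(\rho) \cap g' \cdot \mathrm{Im}(\rho)$ is announced as ``the key step'' but never carried out, and it is precisely the entire difficulty of the stronger claim, so what you have is a plan rather than a proof. Second, and worse, your well-definedness-by-limits argument on $D = \localFSymmetric{k} \cdot \mathrm{Im}(\rho)$ fails: for a fixed $\sigma = g'^{-1}g$, the set $\{x : \sigma \cdot \rho(x) \in \mathrm{Im}(\rho)\}$ can have components lying \emph{entirely} inside the coincidence locus, unreachable as limits of distinct-coordinate points of the same pair of strata, and there the two defining formulas genuinely conflict. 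Concretely, take $k = 6$ and $\sigma$ reordering the rows $(1,2,3,4,5,6) \mapsto (1,2,5,6,3,4)$: the chain condition forces $x_1 = x_3 = x_5$, and on that set, with $x = (a,b,a,c,a,d)$, one gets $\sigma \cdot \rho(x) = \rho(y)$ for $y = (a,b,a,d,a,c) \notin \mathcal{O}_{\localFCyclic{6}}(x)$. A smooth $\localFCyclic{6}$-invariant $\psi$ separating these two finite orbits then admits \emph{no} $\localFSymmetric{6}$-invariant $\phi$ with $\phi \circ \rho = \psi$ at such points, regardless of regularity, so no extension-plus-symmetrization scheme can succeed on the full cube; the construction is only viable after excluding (the image of) the measure-zero set $E$, consistent with the paper's domain $X = [0,1]^n \setminus E$ -- and once $E$ is removed your strata are no longer closed, so the Tietze/Whitney steps would need re-justification as well.
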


We state the following lemma, to prove Theorem \ref{regularity}.
\begin{lemma}
    \label{diffeomorphism}
    \textit{The matrix-valued function $\rho$ defined in \eqref{rho def Z_k} is a diffeomorphism between $[0,1]^k$ and its image $\rho([0,1]^k)$.}
\end{lemma}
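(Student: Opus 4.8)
The plan is to exploit the fact that $\rho$ is the restriction of a linear map and that its first column simply reproduces the input, which immediately supplies a smooth left inverse. Concretely, regard $\rho$ as (the restriction to $[0,1]^k$ of) the linear map $\mathbb{R}^k \to \mathbb{R}^{k\times 2}$ sending $x$ to the matrix displayed in \eqref{rho def Z_k}. Since every entry of $\rho(x)$ is one of the coordinates $x_1,\dots,x_k$, the map is linear and hence $C^\infty$; this dispenses with the smoothness of $\rho$ in a single line.

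For bijectivity onto the image and smoothness of the inverse, I would introduce the coordinate projection $P:\mathbb{R}^{k\times 2}\to\mathbb{R}^k$ that reads off the first column, $P(M)=(M_{11},M_{21},\dots,M_{k1})^{\top}$. Reading off the first column of $\rho(x)$ returns $x$ exactly, i.e.\ $P\circ\rho=\mathrm{id}$ on $[0,1]^k$. This identity does all the work: it shows $\rho$ is injective, hence a bijection onto $\rho([0,1]^k)$, and it identifies the inverse $\rho^{-1}=P|_{\rho([0,1]^k)}$ as the restriction of the linear (therefore $C^\infty$) map $P$. Consequently both $\rho$ and $\rho^{-1}$ are smooth, which is precisely the assertion. (Note the second column of $\rho(x)$ plays no role here; injectivity and the inverse come entirely from the first column.)

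The only point needing care is that $[0,1]^k$ is closed rather than open, so ``smooth'' must be read in the sense of admitting a smooth extension to an open neighborhood. Since $\rho$ and $P$ are each the restriction of a globally defined linear map on all of Euclidean space, such extensions are automatic, and this subtlety is handled trivially. In that sense there is no real obstacle: the entire content of the lemma is the observation that the first column of $\rho(x)$ recovers $x$, yielding a linear, and therefore $C^\infty$, left inverse.
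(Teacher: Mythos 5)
Your proof is correct and follows essentially the same route as the paper's: both rest on the observation that the first column of $\rho(x)$ recovers $x$, so the inverse is the (smooth, indeed linear) first-column projection restricted to $\rho([0,1]^k)$. If anything, your version is slightly more careful than the paper's, since you make the left inverse $P$ explicit and address the subtlety that smoothness on the closed cube $[0,1]^k$ means admitting a smooth extension, which the paper handles only implicitly via the subspace topology.
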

The proof for Lemma \ref{diffeomorphism} is given in the Appendix section.

\begin{proof}
From \ref{thm1:e1}, we have $\psi = \phi \circ \rho$. Thus, $\psi \circ \rho^{-1} = \phi$. From Lemma \ref{diffeomorphism}, $\rho^{-1}$ is smooth ($C^{\infty}$) since $\rho$ is a diffeomorphism. Thus, if $\psi$ is a continuous function ($C^0$), then $\phi$ is composition of $C^{\infty}$ function with a $C^0$ function which in turn implies composition of two $C^0$ functions. Thus $\phi$ is $C^0$. Similarly, if $\psi$ is $C^\infty$, then $\phi$ is a composition of two $C^\infty$ functions. Thus $\phi$ is $C^\infty$.
\end{proof}
Results of the same form as Theorem \ref{Z_k:S_k} and Theorem \ref{regularity} hold for $\psi$ being a $\localFDihedral{k}$- or $\localFSymmetric{k}$-invariant function by replacing the definition of the function $\rho$ with the appropriate definition in Table \ref{tab:rho and invariance}.

         
         

\begin{table}[htp]
    \scriptsize
    \centering
    \begin{tabular}{c|c|c|c}
         &$\localFSymmetric{k}$ & $\localFCyclic{k}$ &$\localFDihedral{k}$\\
        \hline
        \hline
$\rho(x)$  
         &$\begin{bmatrix}
         \vdots \\
             x_i \quad x_i \\
         \vdots \\            
         \end{bmatrix}_{i\in [k]}$
         
         &$\begin{bmatrix}
         \vdots \\
             x_i \quad x_{\tau(i)} \\
         \vdots \\              
         \end{bmatrix}_{i \in [k]}$ 
         
         &$\begin{bmatrix}
         \vdots \\
             x_i \quad x_{\tau(i)} \\
             x_{\tau(i)} \quad x_i \\
         \vdots \\            
         \end{bmatrix}_{i \in [k]}$ \\
\hline         
    \end{tabular}
    \caption{Subgroups of $\localFSymmetric{n}$ and corresponding definitions of the matrix-valued function $\rho$, where $\tau$ is cyclic right shift by 1 element.}
    \label{tab:rho and invariance}
\end{table}

We now state our main result,  which is a {\em single} canonical functional decomposition that includes functions invariant to all the subgroups of type $\localCyclic$, $\localSymmetric$ and $\localDihedral$, in  Theorem \ref{Main framework}. The key idea is to introduce `selection' matrices that appropriately reduce a general function to the specific type of subgroup as in Theorem \ref{Z_k:S_k} ($\localFCyclic{k}$, $\localFDihedral{k}$ or $\localFSymmetric{k}$).


 \begin{theorem}[Unified symmetry discovery framework]
 \label{Main framework}
Let $\mathcal{B}$ denote the class of all functions from $X \to \mathbb{R}$ of the form: 
$$
x \mapsto \phi \left(\left[\begin{array}{l}
\left(M_2 \circ \rho \circ M_1\right)(x) \\
\left(I-M_1\right) \left([x \quad  \bf{0}]\right) 
\end{array}\right]\right)
$$
where,
\begin{itemize}
\item $M_1$ and $M_2$ are matrices of size $n \times n$ and $n^2 \times n^2$ respectively.
\item  $\phi:[0,1]^{n(n+1) \times 2} \rightarrow \mathbb{R}$ is an $\localFSymmetric{n^2}$-invariant function where  the invariance pertains to the initial $n^2$ rows out of a total of $n(n+1)$,
and 
\item $\rho: X \to [0,1]^{n^2 \times 2}$ is a matrix-valued function given as, 
$ \begin{bmatrix}
    x_1 \\ x_2 \\ \vdots \\ x_n
\end{bmatrix} \mapsto \begin{bmatrix}
    \vdots  \\
    x_i \quad x_j \\
    \vdots \\
\end{bmatrix}_{i,j \in [n]}$.
\end{itemize}

 Let $\mathcal{I} = \{i_1, i_2, \dots i_k\} \subseteq [n]$ $(k>1)$ and $\tau$ be the permutation (cyclic shift) as defined in \ref{def:Z_I}. Then, the following hold:
 \begin{itemize}
 
     \item[a)] Any $\localSymmetric$-invariant function belongs to $\mathcal{B}$. Moreover, the matrices $M_1$ and $M_2$ in its decomposition have the forms:
        \begin{align}
        M_1[u,v] &= \begin{cases}
            1, \quad \text{if } u \in [k] \text{ and }v = i_u \\
            0, \quad \text{otherwise}.
        \end{cases}  \label{common M1} \\
        M_2[u,v] &=  \begin{cases}
            1, \quad \text{if } u \in [k^2], \quad u = v \text{ and } \\
            \: \quad   \left(\rho \circ M_1 \right)(x)[v] = (x_i, x_i) \\
            \: \quad \text{ for some } i \in \mathcal{I} \\
            0, \quad \text{otherwise}.
        \end{cases} \label{eq:M2_Sk}
        \end{align}
        
    \item[b)] Any $\localCyclic$-invariant function belongs to $\mathcal{B}$. Moreover,   $M_1$ is of the form as given in \eqref{common M1} and $M_2$ is as follows:
        \begin{align}
        M_2[u, v] &= \begin{cases}
        1, \quad \text{if } u \in [k] \text{ and }\\
        \: \quad  \left(\rho \circ M_1 \right)(x)[v] = (x_{i_u},x_{\tau(i_u)}) \\
        0, \quad \text{otherwise}.\\
        \end{cases} \label{eq:M2_Zk}
        \end{align}

    \item[c)] Any $\localDihedral$-invariant function belongs to $\mathcal{B}$. Moreover,   $M_1$ is of the form as given in \eqref{common M1} and $M_2$ is as follows:
        \begin{align}
        M_2[u, v] &= \begin{cases}
        1, \quad \text{if } u \in [k] \text{ and } \\
        \: \quad  \left(\rho \circ M_1 \right)(x)[v] = (x_{i_u},x_{\tau(i_u)}) \\
        1, \quad \text{else if } u \in [2k] \setminus [k] \text{ and } \\
        \: \quad  \left(\rho \circ M_1 \right)(x)[v] = (x_{\tau(i_{u-k})}, x_{i_{u-k}}) \\
        0, \quad \text{otherwise}.\\
        \end{cases} \label{eq:M2_D2k}
        \end{align}              

 \end{itemize}
\end{theorem}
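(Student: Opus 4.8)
The plan is to construct, for each of the three cases, the required $\localFSymmetric{n^2}$-invariant function $\phi$ directly from the given $\psi$, by showing that the feature map
$$F(x) := \begin{bmatrix} (M_2\circ\rho\circ M_1)(x) \\ (I-M_1)([x\ \ \mathbf{0}]) \end{bmatrix}$$
(which is exactly the argument of $\phi$ in the definition of $\mathcal{B}$) separates points finely enough that the assignment $F(x)\mapsto\psi(x)$ is well defined modulo the $\localFSymmetric{n^2}$-action on the top $n^2$ rows. Concretely, I will show that whenever $F(x)$ and $F(x')$ agree up to a permutation of their top rows, $x'$ lies in the orbit $\mathcal{O}_G(x)$ of the relevant group $G\in\{\localSymmetric,\localCyclic,\localDihedral\}$; $G$-invariance of $\psi$ then forces $\psi(x)=\psi(x')$, so $\phi$ may be defined on top-row orbits by $\phi(F(x)):=\psi(x)$ (and arbitrarily off the range of $F$). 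By construction this $\phi$ depends on the first $n^2$ rows only through their multiset, hence is $\localFSymmetric{n^2}$-invariant, and $\phi\circ F=\psi$, giving $\psi\in\mathcal{B}$.

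The argument splits into two independent ingredients that I would establish first. The first concerns the bottom block and is common to all three parts, since $M_1$ is given by (\ref{common M1}) in every case. Writing $d=x-x'$, equality of the (un-permuted) bottom rows is exactly $(I-M_1)d=0$, i.e. $d=M_1 d$; unpacking the definition of $M_1$ this reads $d_u=0$ for $u>k$ and $d_u=d_{i_u}$ for $u\le k$. I claim this forces $d_j=0$ for every $j\notin\mathcal{I}$, so that $x$ and $x'$ agree off $\mathcal{I}$. The proof is a chain argument: starting from any $j\notin\mathcal{I}$ with $j\le k$, iterate $a_0=j$, $a_{t+1}=i_{a_t}$; because $i_u$ is the $u$-th smallest element of $\mathcal{I}$ one has $i_u\ge u$, and $j\notin\mathcal{I}$ forces the first step to be strict and to land in $\mathcal{I}$, after which each further index stays in $\mathcal{I}$ and the sequence strictly increases until it exceeds $k$; since $d_{a_t}=d_{a_{t+1}}$ along the chain and $d_{a_t}=0$ once $a_t>k$, we conclude $d_j=0$. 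The case $j>k$ is immediate from $d_u=0$ for $u>k$.

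The second ingredient is the orbit correspondence for the top block, which is where the three parts differ and where Theorem \ref{Z_k:S_k} (and its $\localFDihedral{k}$/$\localFSymmetric{k}$ analogues via Table \ref{tab:rho and invariance}) is invoked. With $M_1$ as above, $(\rho\circ M_1)(x)$ lists all pairs $((M_1x)_p,(M_1x)_q)$, and the given $M_2$ merely extracts, respectively, the $k$ diagonal pairs $(x_{i_u},x_{i_u})$ for $\localFSymmetric{k}$ in (a), the $k$ consecutive pairs $(x_{i_u},x_{\tau(i_u)})$ for $\localFCyclic{k}$ in (b), or those together with their $k$ transposes $(x_{\tau(i_u)},x_{i_u})$ for $\localFDihedral{k}$ in (c); all remaining top rows are zeroed. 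Since the domain $X$ excludes coincident coordinates, no genuine selected pair equals the zero padding $(0,0)$, so agreement of the top blocks up to an $\localFSymmetric{n^2}$-permutation is equivalent to equality of the multisets of selected pairs. This multiset is precisely the multiset of rows of the matrix-valued map $\rho$ from Table \ref{tab:rho and invariance} applied to $(x_{i_1},\dots,x_{i_k})$, which by the orbit correspondence established in (the proof of) Theorem \ref{Z_k:S_k} determines, and is determined by, the $G$-orbit of $(x_{i_1},\dots,x_{i_k})$ for the appropriate $G$. Combining the two ingredients, agreement of $F(x)$ and $F(x')$ up to a top-row permutation yields both equality off $\mathcal{I}$ and a common $G$-orbit on $\mathcal{I}$, i.e. $x'\in\mathcal{O}_G(x)$, as needed.

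I expect the chain argument of the first ingredient to be the main obstacle: it is the one genuinely new computation, and it is the step that makes precise the non-obvious fact that the ``shift-to-front'' selection $M_1$ has its $(I-M_1)$-kernel supported entirely on $\mathcal{I}$ even when $\mathcal{I}\ne[k]$. Everything else is either bookkeeping (verifying that the explicit entries of $M_2$ in (\ref{eq:M2_Sk}), (\ref{eq:M2_Zk}), (\ref{eq:M2_D2k}) select the correct pairs) or a direct appeal to the already-proved orbit correspondence of Theorem \ref{Z_k:S_k} and Table \ref{tab:rho and invariance}.
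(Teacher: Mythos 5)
Your proposal is correct, but it takes a genuinely different route from the paper's own proof. The paper establishes Theorem \ref{Main framework} by passing through canonical sum-decompositions: it combines Theorem \ref{Z_k:S_k} with the Deep Sets representation of \cite{zaheer2017deep} to write $\psi(x) = f_1\bigl(\sum_{i \in [k]} f_2(x_i, x_{\tau(i)})\bigr)$, cites \cite{karjol2023neural} for the local form $\psi(x) = f_3\bigl(\sum_{i \in \mathcal{I}} f_4(x_i, x_{\tau(i)}), Q\bigr)$ with $Q = (I - M_1)[x \;\; \mathbf{0}]$, and then verifies that the architecture with the stated $M_1, M_2$ reproduces this form, the $n^2 - k$ zero-padded rows contributing only a constant bias term absorbed into the outer function. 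You bypass both external representation results: you define $\phi$ directly on the range of the feature map $F$ via $\phi(F(x)) := \psi(x)$ and prove well-definedness by orbit separation, reusing the orbit correspondences of Theorem \ref{Z_k:S_k} (and its dihedral analogue, Theorem \ref{D2k_S2k} in the appendix, which is where the exclusion of $E$ is genuinely needed) for the top block. Your chain lemma for the kernel of $I - M_1$ is the genuinely new ingredient and it is sound; the one step left implicit in your write-up is why later steps also increase strictly: $i_{a_t} = a_t$ would force $\{i_1, \dots, i_{a_t}\} = [a_t] \subseteq \mathcal{I}$, contradicting $j \notin \mathcal{I}$ with $j \le a_t$. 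As to what each approach buys: the paper's route produces the explicit computational canonical form later exploited in its Discussion section, whereas yours is self-contained and makes precise the role of the bottom block, which the paper tacitly assumes carries the off-$\mathcal{I}$ information; you also correctly use the exclusion of $E$ to separate genuine selected pairs from the $(0,0)$ padding. One phenomenon your analysis surfaces (and the paper's proof does not): when $\mathcal{I} \neq [k]$, the first column of $Q$ can pin down some, even all, coordinates inside $\mathcal{I}$ as well --- e.g.\ $n = 5$, $\mathcal{I} = \{2, 4\}$ gives $(x_1 - x_2,\, x_2 - x_4,\, x_3,\, x_4,\, x_5)$, which recovers $x$ entirely. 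This only adds constraints on the pair $(x, x')$ and hence strengthens your well-definedness implication, so your proof is unaffected; it does mean that functions in $\mathcal{B}$ with these $M_1, M_2$ need not themselves be $G$-invariant, but the theorem claims only containment, which both proofs deliver.
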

\textit{Proof. (Sketch)} 
The goal is to show that $\phi \circ  M_2 \circ \rho \circ M_1$ (with $\phi$ being $\localFSymmetric{n^2}$-invariant and $\rho$ is as defined in the Theorem \ref{Main framework}) is equivalent to $\phi \circ \rho$ (with $\phi$ being $\localFSymmetric{k}$-invariant and $\rho$ is specific to the unknown subgroup, an example of which is given in Theorem \ref{Z_k:S_k}). This is achieved via appropriately choosing $M_1$ and $M_2$ so that the elements of the form $(x_i, x_j)$ specific to the subgroup are selected. The $M_1$ helps in selecting appropriate indices over which the subgroup acts and $M_2$ helps in identifying the broader category (symmetric, cyclic or dihedral) of the subgroup.

\begin{remark}
While the domain of the function $\phi$ is defined as $[0,1]^{n(n+1) \times 2}$, it is worth noting that, when $\phi$ is post-composed with the transformation $M_2 \circ \rho \circ M_1$, the input to $\phi$ inevitably contains zeros at specific positions, which are contingent upon the selection matrices $M_1$ and $M_2$. Consequently, the $\localFSymmetric{n^2}$-invariance exhibited by $\phi$ effectively translates to permutation invariance with respect to the remaining indices (among the first $n^2$), namely the non-zero elements. Further elucidation on this aspect is presented in the Appendix section of this paper.
\end{remark}


We further remark that Theorem \ref{Main framework} can be extended to express functions invariant to wider classes of subgroups. The following results offer a glimpse of how this can be achieved, for instance, for product groups.

\begin{restatable}[Invariance to product groups]{theorem}{thmProductGroups}
    Let $[n] = \bigcup\limits_{j=1}^L \mathcal{I}_j$ be a partition of $[n]$, $G_i \in \{S_{\mathcal{I}_j}, D_{\mathcal{I}_j}, \mathbb{Z}_{\mathcal{I}_j}\}, \forall j \in [L]$ and $G = G_1 \times G_2 \times \cdots G_L$ such that no two groups $G_i, G_j$  are isomorphic and only one of the component groups is of the type $\localSymmetric$.
    Let $\psi$ be a $G$-invariant function, then there exists an $\localFSymmetric{l}$-invariant function $\phi$ and a specific matrix-valued function $\rho$, such that,
    \begin{equation}
        \psi = \phi \circ \rho.
    \end{equation}
    \label{Product group:S_l}
\end{restatable}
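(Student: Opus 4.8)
The plan is to follow the template of Theorem~\ref{Z_k:S_k}, reducing the statement to a single orbit-separation claim. First I would build $\rho$ blockwise: for each part $\mathcal{I}_j$ of the partition I append to the output matrix the rows prescribed by Table~\ref{tab:rho and invariance} for the type of $G_j$, namely the diagonal pairs $(x_i,x_i)$ if $G_j$ is $\localSymmetric$-type, the directed consecutive pairs $(x_i,x_{\tau(i)})$ if $G_j$ is $\localCyclic$-type, and both $(x_i,x_{\tau(i)})$ and $(x_{\tau(i)},x_i)$ if $G_j$ is $\localDihedral$-type. Writing $l$ for the total number of rows so produced, this defines $\rho:[0,1]^n \to [0,1]^{l\times 2}$. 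I would then take $\phi$ to be the $\localFSymmetric{l}$-invariant function that assigns to the $\localFSymmetric{l}$-orbit of $\rho(x)$ the value $\psi(x)$, and an arbitrary value on $\localFSymmetric{l}$-orbits outside the image of $\rho$. Exactly as in Theorem~\ref{Z_k:S_k}, the identity $\psi=\phi\circ\rho$ then holds by construction, provided $\phi$ is well defined; and well-definedness reduces to showing that the induced map $X/G \to [0,1]^{l\times 2}/\localFSymmetric{l}$ is injective, i.e.\ that the unordered multiset of rows of $\rho(x)$ determines $x$ up to the action of $G$.

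The heart of the argument is therefore a reconstruction claim: from the multiset of rows of $\rho(x)$ one recovers the $G$-orbit of $x$. Here I would use crucially that $x\in X$, so that all $n$ coordinates are distinct; consequently the off-diagonal rows can be read as directed edges on the vertex set $\{x_1,\dots,x_n\}$, and each block contributes edges on its own set of values, which are pairwise disjoint across blocks. The reconstruction then proceeds in three stages. First, the diagonal rows $(a,a)$ can only come from an $\localSymmetric$-type block; since at most one such block is present, collecting all diagonal rows recovers precisely that block's values as a multiset, i.e.\ up to its $\localFSymmetric{}$-action. Second, after removing the diagonal rows, the remaining directed edges form a disjoint union of cycles whose connected components are extracted unambiguously because their vertex sets are disjoint. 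Third, each component is matched to the block that produced it and decoded up to the appropriate group: a simple directed cycle identifies a $\localCyclic$-block and fixes its coordinates up to rotation, whereas a cycle in which every edge occurs together with its reverse identifies a $\localDihedral$-block and fixes its coordinates up to rotation and reflection.

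The matching in the third stage is exactly where the hypotheses enter, and I expect it to be the main obstacle. Two components can be confused only when they share the same (length, type) signature, which occurs precisely when the corresponding groups $G_i,G_j$ are abstractly isomorphic; the assumption that no two component groups are isomorphic rules this out and makes the component-to-block assignment unique. The edge-reversal test cleanly separates $\localCyclic$- from $\localDihedral$-cycles at every block size $k\ge 3$, and the only genuine overlap, the degenerate coincidence at size two where the cyclic and dihedral groups on two points are isomorphic, is likewise excluded by the non-isomorphism hypothesis (and in any case yields the same orbit). Finally, the restriction to a single $\localSymmetric$-type block is what makes the first stage unambiguous: two symmetric blocks would pool their diagonal values into one multiset, forcing $\phi\circ\rho$ to be invariant under the full symmetric group on the union of their index sets and hence unable to represent a function that is only $G$-invariant. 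Once the assignment is pinned down, decoding each block up to its own group and forming the product recovers $x$ up to $G$, which establishes injectivity on orbit space and completes the proof.
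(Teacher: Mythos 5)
Your proposal is correct and follows essentially the same route as the paper: the identical blockwise construction of $\rho$ from Table~\ref{tab:rho and invariance}, the same definition of $\phi$ via values on $\localFSymmetric{l}$-orbits of $\operatorname{Im}(\rho)$, and the same reduction to orbit-injectivity, i.e.\ that $g \cdot \rho(x) \in \operatorname{Im}(\rho)$ forces $g \cdot \rho(x) = \rho(h \cdot x)$ for some $h \in G$. Your graph-reconstruction phrasing (diagonal rows isolate the unique $\localSymmetric$-type block, the remaining directed edges split into value-disjoint cycles matched to blocks using the non-isomorphism hypothesis, with the edge-reversal test separating cyclic from dihedral components) is simply a cleaner rendering of the paper's own componentwise argument, which peels off the largest dihedral block first and then invokes Theorems~\ref{Z_k:S_k} and~\ref{D2k_S2k} block by block.
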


\textit{Proof. (Sketch)} Let us define the function $\rho$, which maps to the appropriate elements of the form $(x_i, x_j)$, corresponding to individual components of the product group $G$. It is important to note that $\rho$ is both injective and $G$-equivariant. We denote the variable $l$ (as in $\localFSymmetric{l}$-invariant function) to represent the total number of these appropriate elements. With this setup, we can demonstrate that each $G$-orbit can be uniquely associated with an $S_{n^2}$-orbit within the transformed space denoted as $Im(\rho)$. This mapping is analogous to the proof technique employed in Theorem \ref{Z_k:S_k}.

\begin{corollary}
    Let $\sigma \in S_n$ and $G = \big\langle \sigma \big\rangle$ such that whose disjoint cycles have unique lengths. Let $\psi$ be a $G$-invariant function, then there exists an $\localFSymmetric{l}$-invariant function $\phi$ and a specific matrix-valued function $\rho$, such that, $\psi = \phi \circ \rho.$        
    \label{generated subgroup}
\end{corollary}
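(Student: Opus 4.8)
The plan is to reduce the statement to the product-group decomposition of Theorem~\ref{Product group:S_l} by exploiting the disjoint-cycle structure of $\sigma$. First I would write $\sigma = c_1 c_2 \cdots c_r$ as a product of disjoint cycles, where $c_j$ acts on an index set $\mathcal{I}_j \subseteq [n]$ of size $k_j \ge 2$, the lengths $k_j$ being pairwise distinct by hypothesis; any fixed points of $\sigma$ are acted on trivially by $G$ and may simply be passed through $\phi$ unchanged, so I restrict attention to the support $\bigcup_j \mathcal{I}_j$. Each $\langle c_j\rangle$ is exactly the locally cyclic group $\mathbb{Z}_{\mathcal{I}_j}$ of Definition~\ref{def:Z_I}, and since the $k_j$ are distinct these cyclic components are pairwise non-isomorphic (the $\localFCyclic{2}\cong\localFSymmetric{2}$ identification is harmless, since at most one block has length two), matching the hypotheses of Theorem~\ref{Product group:S_l} for the product group $H = \mathbb{Z}_{\mathcal{I}_1}\times\cdots\times\mathbb{Z}_{\mathcal{I}_r}$.

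The key structural fact I would establish is the relationship between $G=\langle\sigma\rangle$ and $H$. Since $\sigma$ restricts to $c_j$ on $\mathcal{I}_j$, we have $G \le H$, with $G$ cyclic of order $N=\mathrm{lcm}(k_1,\dots,k_r)$ while $|H|=\prod_j k_j$. When the $k_j$ are pairwise coprime these orders coincide and $G=H$, so a $G$-invariant $\psi$ is automatically $H$-invariant and the decomposition $\psi=\phi\circ\rho$ follows immediately from Theorem~\ref{Product group:S_l}. In general, however, $G\subsetneq H$: the group $G$ applies the same power $\sigma^t$ to every block simultaneously (a shift by $t\bmod k_j$ on block $j$), whereas $H$ rotates the blocks independently. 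Hence $G$-invariance is strictly weaker than $H$-invariance, and closing this gap is the crux of the proof.

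To handle the general case I would construct a coordinate-pair map $\rho$ (entries of the form $(x_i,x_j)$, as in Theorem~\ref{Z_k:S_k} and the sketch of Theorem~\ref{Product group:S_l}) that is injective and $G$-equivariant and whose multiset of rows is a \emph{complete} invariant of the $G$-orbit. I would let $\rho(x)$ consist of (i) the within-block successor pairs $(x_i,x_{\sigma(i)})$ for $i\in\mathcal{I}_j$, which, exactly as in Theorem~\ref{Z_k:S_k} (the $\localFCyclic{k}$ column of Table~\ref{tab:rho and invariance}), encode each block's cyclic value-sequence $v^j$ up to rotation, and (ii) cross-block pairs ranging over the $\sigma$-orbit of a chosen representative pair for each pair of blocks $a<b$, which encode the \emph{relative phase} between blocks $a$ and $b$. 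Distinctness of the $k_j$ enters crucially here: because each block's value-cycle has a unique length, the $\localFSymmetric{l}$-invariant $\phi$, which only ever sees the unordered multiset of rows, can still attribute each cycle, and each cross-pair, to its correct block. I would then show the multiset determines the $G$-orbit: part (i) fixes every $v^j$ up to an independent rotation (the $H$-orbit data), while part (ii) supplies all pairwise relative rotations; by the generalized Chinese Remainder Theorem, pairwise compatibility $r_a\equiv r_b \pmod{\gcd(k_a,k_b)}$ is exactly the solvability condition for a single simultaneous shift $t$, so the pairwise data reconstructs the simultaneous-rotation class, i.e.\ the $G$-orbit. With $\rho$ in hand I would define $\phi$ on $\mathrm{Im}(\rho)$ by $\phi(\rho(x)):=\psi(x)$, well defined because $\rho(x)=\rho(x')$ forces $x,x'$ into a common $G$-orbit and $\psi$ is $G$-invariant, extend it arbitrarily and symmetrically off the image, and conclude $\psi=\phi\circ\rho$.

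The main obstacle is step (ii): recovering the relative phases between blocks from an \emph{unordered} multiset of coordinate pairs, since this is precisely the information separating $G$ from the larger product group $H$. A tempting shortcut, embedding the whole orbit as a single $\localFCyclic{N}$-cycle $z_t=\mathrm{enc}(\sigma^t x)$ and invoking Theorem~\ref{Z_k:S_k} directly, would make distinct lengths unnecessary, but it requires packing an entire vector into one entry and so leaves the coordinate-pair form demanded by the framework. Staying within coordinate pairs is exactly what forces both the distinct-length hypothesis (to keep the blocks identifiable under the permutation-invariant $\phi$) and the CRT argument (to certify that pairwise relative phases determine the global one); the delicate point I would have to verify carefully is that the combined multiset can be unambiguously parsed back into its within-block and cross-block components.
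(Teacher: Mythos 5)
Your proposal is correct in outline but takes a genuinely different---and more careful---route than the paper, whose own proof of Corollary~\ref{generated subgroup} is a two-line reduction: decompose $\sigma$ into disjoint cycles, assert $G = \mathbb{Z}_{\mathcal{I}_1} \times \cdots \times \mathbb{Z}_{\mathcal{I}_L}$ (the components being non-isomorphic since the lengths differ), and invoke Theorem~\ref{Product group:S_l}. As you observe, that identification holds only when the cycle lengths are pairwise \emph{coprime}: in general $\langle \sigma \rangle$ is cyclic of order $\mathrm{lcm}(k_1,\dots,k_L)$ sitting strictly inside a product of order $\prod_j k_j$; e.g.\ for $\sigma = (1\,2)(3\,4\,5\,6)$ the lengths $2$ and $4$ are distinct yet $(1\,2) \notin \langle \sigma \rangle$, so a $\langle\sigma\rangle$-invariant function need not be invariant under the product group, and the paper's argument as written covers only the coprime case. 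Your repair---augmenting the within-block successor pairs of Theorem~\ref{Z_k:S_k} with cross-block pairs tracing the $\sigma$-orbit of one representative pair per pair of blocks, so that the row multiset additionally records each relative phase modulo $\gcd(k_a,k_b)$, then using the generalized CRT solvability criterion $r_a \equiv r_b \pmod{\gcd(k_a,k_b)}$ to recover a single simultaneous shift $t$ with $t \equiv r_j \pmod{k_j}$---supplies exactly the information separating $G$ from the product group, and the phase bookkeeping is right: the cross-pair multisets of $x$ and of its block-wise rotation by $(r_1,\dots,r_L)$ agree precisely when all pairwise congruences hold. What each approach buys: the paper's proof is immediate but valid only under pairwise coprimality (or, alternatively, proves a weaker statement about product-group-invariant functions), while yours proves the corollary as stated for all distinct lengths, at the cost that your enlarged $\rho$ no longer matches the block-wise $\rho$ of Theorem~\ref{Product group:S_l}, so you cannot cite that theorem verbatim and must redo its orbit-matching step (Step~3 of Theorems~\ref{Z_k:S_k} and~\ref{D2k_S2k}) for the new map---this is precisely your flagged ``parsing'' lemma, i.e.\ that a row permutation carrying $\rho(x)$ into $\mathrm{Im}(\rho)$ cannot trade cross rows for within-block rows; the disjointness of block value sets on $X$ (the set $E$ being excluded) and the fact that within-block rows chain into cycles of the pairwise distinct lengths $k_j$ should close this, but it is the one step that genuinely needs to be written out. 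Note also that your construction remains compatible with the unified architecture of Theorem~\ref{Main framework}, since the global $\rho$ there already produces all pairs $(x_i, x_j)$ and your cross-block pairs can be selected by $M_2$.
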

\textit{Proof.} We use the fact that any permutation $\sigma$ can be decomposed into disjoint cycles. Hence $G = \mathbb{Z}_{\mathcal{I}_1} \times \mathbb{Z}_{\mathcal{I}_2} \cdots \times \mathbb{Z}_{\mathcal{I}_L}$ with no two $\mathbb{Z}_{\mathcal{I}_k}, \mathbb{Z}_{\mathcal{I}_l}$ are isomorphic (because the lengths are different). Applying Theorem \ref{Product group:S_l}, we prove the claim. 
\subsection{Optimization for discovering symmetries}
\label{ss:optimization}

Having proposed, via Theorem \ref{Main framework}, a common functional form ($\phi \circ M_2 \circ \rho \circ M_1$) for any function invariant to symmetries of type $\localCyclic$, $\localDihedral$ or $\localSymmetric$, we turn to methods to fit the functional form to data and discover the underlying symmetry.

A straightforward approach is to employ standard stochastic gradient descent (SGD)-type optimization jointly over $\phi$, parameterized as a neural network, and $M_1, M_2$, parameterized as matrices in $\mathbb{R}^{n \times n}$ and $\mathbb{R}^{n^2 \times n^2}$, respectively. However, in view of the discrete structure of $M_1, M_2$ prescribed explicitly by Theorem \ref{Main framework} (equations \eqref{common M1}-\eqref{eq:M2_D2k}), we resort to multi-armed bandit sampling to learn the best $(M_1,M_2)$ pair in an `outer loop', with SGD over $\phi$ running in the `inner loop'. Specifically, each arm of the bandit corresponds to a $(M_1,M_2)$ pair, and the reward for it is the negative of the loss that SGD over $\phi$ obtains for that pair. This approach is advantageous for two reasons: (i) It confers interpretability in the sense that the underlying symmetry can be directly read off from the  $M_1, M_2$ which is ultimately learnt by the bandit outer loop, (ii) A bandit algorithm over $(M_1,M_2)$ performs global optimization and avoids the potential pitfalls of using gradient descent that could get stuck in local optima. 

{\bf Linear Thompson Sampling (LinTS)-based bandit optimization algorithm:} Observe that although the space of matrices $(M_1, M_2)$ guaranteed by Theorem \ref{Main framework} is discrete, it is still an exponentially large set. To enable efficient search over this set, we resort to using the linear parametric  Thompson sampling algorithm (LinTS) \citep{agrawal2013thompson}. In this strategy, whose pseudo code appears in Algorithm \ref{alg:linTS}, each possible pair of matrices $(M_1,M_2)$, denoting an arm of the bandit, is represented uniquely by a {\em binary} feature vector of an appropriate dimension $d$ (described in detail below). The reward from playing an arm with feature vector $a$ (which is the negative loss after optimizing for $\phi$ using SGD) is assumed to be linear in $a$ with added zero-mean noise, i.e., $\exists \mu^\star \in \mathbb{R}^d$ such that the expected reward upon playing $a$ is $a^{\top}\mu^{\star}$. LinTS maintains and iteratively updates a (Gaussian) probability distribution (lines \ref{line:Bupdate}, \ref{line:fupdate} and \ref{line:muupdate}) over the unknown reward model $\mu^{\star}$, and explores the arm space by sampling from this probability distribution in each round (line \ref{line:sampling}). 

Using LinTS for exploring across $(M_1, M_2)$ is advantageous for several reasons. The chief one is that even though the arm set of binary vectors, representing all possible $M_1, M_2$ matrices, is exponentially large (of cardinality $O(3\cdot 2^n)$), finding the arm maximizing the reward for a sampled vector $\mu$ (line \ref{line:opt}) is a constant-time operation. Another reason to prefer LinTS as a search strategy is that it enjoys a rigorous guarantee on the probability of error in finding the best arm in a true linear model, as we show in Theorem \ref{thm:TS} below.
\RestyleAlgo{ruled}
\begin{algorithm}[hbt!]
\caption{Linear Parametric Thompson Sampling for Subgroup Discovery}\label{alg:two}
\label{alg:linTS}

$\textbf{Initialize:}$ $\mathcal{A} \subset \{0,1\}^d$ (arm set: binary feature vectors representing each pair of matrices $(M_1,M_2)$), \\
$B \leftarrow I_d$ (prior covariance), \\
$f \leftarrow 0 \in \mathbb{R}^d, \hat{\mu} \leftarrow 0 \in \mathbb{R}^d$ (prior mean),\\
$\nu > 0$ (variance inflation parameter),\\
$T$ (time horizon). \\
\For{$t \in \big \{1, 2, \ldots, T \big\}$}{
    Sample $\mu$ independently from $\mathcal{N}\left(\hat{\mu}, \nu^2 B^{-1}\right)$ \label{line:sampling} \\
    $a \leftarrow \arg \max_{a' \in \mathcal{A}}  \mu^{\top} a'$ \label{line:opt}  \\
    $B \leftarrow B+ a a^{\top}$ \label{line:Bupdate} \\
    Fix matrices $M_1, M_2$ in the architecture as per $a$, and run SGD over $\phi$ with loss function $L(\phi) = \frac{1}{m} \sum \limits_{u=1}^m \ell \left(y^{(u)}, (\phi \circ M_2 \circ \rho \circ M_1) \left(x^{(u)} \right) \right)$ to obtain $\tilde{\phi}$ \\
    Set reward from arm $a$: $\gamma \leftarrow -L(\tilde{\phi})$ \\
    $f \leftarrow f+ a \gamma$ \label{line:fupdate}\\
    $\hat{\mu} \leftarrow B^{-1}f$ \label{line:muupdate}\\ 
  }
\textbf{return} $A_T = \arg \max_{a \in \mathcal{A}} a^{\top} \hat{\mu}$ (best arm for the estimated linear model) \label{line:reco}
\end{algorithm}


\textbf{Features for bandit arms:} 
To specify the feature vector for each bandit arm, we employ one-hot encoding to represent the general subgroup category in the order given as, locally symmetric, dihedral, and cyclic respectively. An n-dimensional vector is utilized to represent the corresponding indices, where the indices pertaining to the subgroup category are set to 1, while the remaining indices are set to 0. Subsequently, this vector can be concatenated with a one-hot encoded representation of the subgroup category. For example, with $n = 10$, $G = \localCyclic$, and $\mathcal{I} = \{3, 5, 6, 8\}$ the overall feature vector is given as follows:
\begin{equation*}
    a = [{\color{blue}0, 0, 0, 1, 0, 1, 1, 0, 1, 0,\;} {\color{red} 0, 0, 1}]^T.
\end{equation*}
The first $n$ indices (in blue) above correspond to the actual indices, while the last three indices (in red) indicate the respective subgroup type.

Our next result is a performance guarantee for the LinTS algorithm (Algorithm \ref{alg:linTS}), showing a bound on its probability of misidentifying the optimal arm in a linear reward model. 

\begin{restatable}[Error probability bound for LinTS]{theorem}{thmTS}
    \label{thm:TS}
    Let the set of arms $\mathcal{A} \subset \mathbb{R}^d$ be finite. Suppose that the reward from playing an arm $a \in \mathcal{A}$ at any iteration, conditioned on the past, is sub-Gaussian with mean\footnote{A random variable $X$ is said to be sub-Gaussian with mean $\beta$ if $\mathbb{E}[e^{t(X-\beta)}] \leq e^{t^2/2}$. } $a^{\top} \mu^\star$. After $T$ iterations, let the guessed best arm $A_T$ be drawn from the empirical distribution of all arms played in the $T$ rounds, i.e., $\mathbb{P}[A_T = a] = \frac{1}{T} \sum_{t=1}^T \mathbf{1}\{a^{(t)} = a \}$ where $a^{(t)}$ denotes the arm played in iteration $t$. Then, 
        \[ \mathbb{P}[A_T \neq a^\star] \leq  \frac{c\log(T)}{T},\]
    where $c \equiv c\left(\mathcal{A}, \mu^\star, \nu \right)$ is a quantity that depends on the problem instance ($\mathcal{A}, \mu^\star$) and algorithm parameter ($\nu$). 
\end{restatable}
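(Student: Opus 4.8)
The plan is to bound the error probability $\mathbb{P}[A_T \neq a^\star]$ by relating it to the frequency with which LinTS plays suboptimal arms. Since $A_T$ is sampled from the empirical play distribution, we have the clean identity
\begin{equation*}
    \mathbb{P}[A_T \neq a^\star] = \mathbb{E}\left[\frac{1}{T} \sum_{t=1}^T \mathbf{1}\{a^{(t)} \neq a^\star\}\right] = \frac{1}{T} \mathbb{E}\left[N_T\right],
\end{equation*}
where $N_T = \sum_{t=1}^T \mathbf{1}\{a^{(t)} \neq a^\star\}$ is the total number of rounds in which a suboptimal arm is pulled. Thus the theorem reduces to showing $\mathbb{E}[N_T] \leq c \log(T)$, i.e., that the expected number of suboptimal pulls grows only logarithmically. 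This is precisely a cumulative-regret-style bound, and I would aim to import it from the existing LinTS analysis rather than rederive it from scratch.

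The first step is to invoke the finite-arm structure. Because $\mathcal{A}$ is finite, there is a strictly positive suboptimality gap $\Delta = \min_{a \neq a^\star} (a^\star - a)^\top \mu^\star > 0$, and the expected cumulative regret $\mathbb{E}[\sum_{t=1}^T (a^\star - a^{(t)})^\top \mu^\star]$ dominates $\Delta \cdot \mathbb{E}[N_T]$ from below. Hence $\mathbb{E}[N_T] \leq \mathbb{E}[R_T]/\Delta$, where $R_T$ is the cumulative regret. The second step is to apply a known problem-dependent (gap-dependent) regret bound for linear Thompson sampling with sub-Gaussian rewards, of the form $\mathbb{E}[R_T] = O(\log T)$ for a fixed finite arm set; such bounds are standard in the bandit literature (e.g., the frequentist regret analyses of \citet{agrawal2013thompson} and subsequent refinements). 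Combining these two steps yields $\mathbb{E}[N_T] = O(\log T / \Delta)$, and folding all instance- and algorithm-dependent constants (the gap $\Delta$, the arm-set geometry, the ambient dimension $d$, and the inflation parameter $\nu$) into a single constant $c \equiv c(\mathcal{A}, \mu^\star, \nu)$ completes the argument.

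The main obstacle is obtaining the $O(\log T)$ regret bound in the appropriate form. The original LinTS guarantees are often stated as high-probability Bayesian or worst-case $\tilde{O}(d\sqrt{T})$ bounds, whereas here a \emph{gap-dependent} logarithmic bound in \emph{expectation} is required; bridging this gap requires either citing a sharper finite-arm analysis or supplying a concentration argument showing that, once the posterior covariance $\nu^2 B^{-1}$ has contracted sufficiently, the sampled parameter $\mu$ (line \ref{line:sampling}) places the maximizer at $a^\star$ with high probability. The key technical content lies in controlling the exploration phase: one must argue that the number of rounds before $B$ accumulates enough information in every relevant direction is logarithmic in $T$, which typically rests on self-normalized martingale (sub-Gaussian concentration) inequalities for the least-squares estimate $\hat\mu = B^{-1}f$ together with anti-concentration of the Gaussian sample around $\hat\mu$. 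I would treat the careful verification of this logarithmic exploration bound, and its uniform dependence on the instance through $\Delta$ and the minimum eigenvalue of the informative arm directions, as the crux of the proof, deferring the routine martingale computations to the appendix.
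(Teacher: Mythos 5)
Your proposal is correct and follows essentially the same route as the paper: reduce the misidentification probability to the expected number of suboptimal pulls via the empirical-distribution sampling rule (the paper phrases this through the simple-regret identity $R_T^{\text{simp}} = R_T/T$ of \citet[Prop.~33.2]{lattimore2020bandit}, which is the same computation as your direct identity), divide by the minimum gap $\Delta_{\min}$, and invoke a gap-dependent logarithmic cumulative regret bound for LinTS. The one ingredient you flag as the crux and defer --- a finite-arm, gap-dependent $O(\log T)$ regret bound, which indeed does not follow from the $\tilde{O}(d\sqrt{T})$ guarantees of \citet{agrawal2013thompson} --- is exactly what the paper imports as a black box from \citet[Thm.~3]{tsuchiya2020analysis}, so no new concentration argument is needed.
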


Note that the rule for guessing the best arm $A_T$ at the end of the time horizon is slightly different compared to that of Algorithm \ref{alg:linTS}[line \ref{line:reco}]. This result is derived by appealing to a standard reduction between cumulative regret and simple regret for the empirical distribution-based guessing rule \citep{lattimore2020bandit}. This is then combined with a recent logarithmic bound for the cumulative regret for LinTS \citep{tsuchiya2020analysis} on one hand, along with an inequality relating simple regret to the probability of misidentifying the best arm on the other, to obtain the result (the explicit form of $c$ appears in the appendix). We are unaware of any prior result that bounds the identification error probability of linear parametric Thompson sampling, so this result may be of independent interest. 

{\bf Alternative optimization algorithms:} Instead of linear Thompson sampling and gradient descent, one could choose a variety of methods to optimize the unified architecture across the functions $M_1, M_2$ and $\phi$, depending on practical considerations. We have already mentioned the possibility of using gradient-based optimization jointly across all three functions. On the other end, one can employ global optimization methods such as Bayesian optimization \citep{shahriari2015taking} for the continuous space of $\phi$, along with multi-armed bandits for $M_1, M_2$ as we have done here. Of course, even the design of adaptive discrete sampling algorithms for finding the best $M_1,M_2$ is open to a wide variety of possibilities, including best arm identification algorithms for linear bandits \citep{fiez2019sequential}, simulated annealing \citep{rutenbar1989simulated} and evolutionary algorithms \citep{hruschka2009survey}, to name just a few. 

\section{Discussion}
The work introduced by \cite{karjol2023neural} can be considered as a specific instance of our work, when $\rho$ is an identity function, in which the resulting architecture is a composition of an $\localFSymmetric{n^2}$-invariant function and a linear transformation. In this section, we formally analyze the limitations associated with such an approach and establish the non-realizability of $\localFCyclic{k}$-invariant functions using $\localFSymmetric{k}$-invariant functions and a linear transformation for $k \geq 3$. 

\begin{restatable}{theorem}{thmlinearZkSk}
\label{Z_k:S_k linear non-realisability}
Consider the following set of functions, for $k \geq 3$:
\begin{equation*}
    \mathcal{A}_k = \Big \{\phi \circ M \big | M \in \mathbb{R}^{k \times k} \text{(matrix), } \phi \text{ is } \localFSymmetric{k}\text{-invariant} \Big \}.
\end{equation*}
Then, $\exists$ a $\localFCyclic{k}$-invariant function $\psi$ such that $\psi \notin \mathcal{A}_k$.
\end{restatable}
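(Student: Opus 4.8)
The plan is to exhibit one explicit $\localFCyclic{k}$-invariant function $\psi$ and rule out the two ways a putative representation $\psi=\phi\circ M$ (with $\phi$ being $\localFSymmetric{k}$-invariant) could arise — namely $M$ singular and $M$ invertible — by showing that each would force onto $\psi$ a symmetry it provably lacks. Concretely, I would take the homogeneous cubic $\psi(x)=\sum_{i=1}^{k}x_i^2\,x_{i+1}$ (indices mod $k$). This is manifestly $\localFCyclic{k}$-invariant, since the cyclic shift only permutes the summands, and for $k\ge 3$ it is not $\localFSymmetric{k}$-invariant, since a transposition of two indices produces a monomial (e.g.\ $x_2^2 x_1$) absent from the original sum. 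I then assume $\psi=\phi\circ M$ and derive a contradiction.

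First I would dispose of singular $M$. If $\ker M\neq\{0\}$, fix $0\neq v\in\ker M$; then $\psi(x+v)=\phi(M(x+v))=\phi(Mx)=\psi(x)$, so $\psi$ would be invariant under the nonzero translation $v$. But a short computation shows $\psi$ admits no such symmetry: writing $v\cdot\nabla\psi(x)=\sum_i v_{i+1}\,x_i^2+2\sum_i v_i\,x_i x_{i+1}$, the monomials $x_i^2$ and $x_i x_{i+1}$ are all distinct for $k\ge 3$, so this quadratic form vanishes identically only when every $v_{i+1}=0$, i.e.\ $v=0$. Hence $M$ must be invertible. In that case, for every permutation matrix $P$ the $\localFSymmetric{k}$-invariance of $\phi$ gives $\psi(M^{-1}PM\,x)=\phi(PMx)=\phi(Mx)=\psi(x)$, so the linear symmetry group $\mathrm{Lin}(\psi)=\{T\in GL_k(\mathbb{R}):\psi\circ T=\psi\}$ would contain the conjugate subgroup $M^{-1}\localFSymmetric{k}M\cong \localFSymmetric{k}$, of order $k!$. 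It therefore suffices to show that $\mathrm{Lin}(\psi)$ contains no copy of $\localFSymmetric{k}$; I would do this by proving $\mathrm{Lin}(\psi)$ is finite and, for the chosen $\psi$, equal to the cyclic group $\localFCyclic{k}$ it already contains, of order $k$. Since $k<k!$ precisely for $k\ge 3$, no order-$k!$ subgroup can fit, giving the contradiction. (This is exactly where the hypothesis $k\ge 3$ is used: for $k=2$ one has $\localFCyclic{2}=\localFSymmetric{2}$ and the claim is false.)

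The main obstacle is the final step, bounding $\mathrm{Lin}(\psi)$. Finiteness I would obtain by checking that no nonzero $A\in\mathfrak{gl}_k(\mathbb{R})$ satisfies the infinitesimal invariance $(Ax)\cdot\nabla\psi(x)\equiv 0$, which excludes any positive-dimensional symmetry and forces $\mathrm{Lin}(\psi)$ to be a finite linear group. The remaining task is to confirm that this finite group has no element beyond $\localFCyclic{k}$; it is enough to verify that $\psi$ is fixed by no transposition-type reflection (an involution with eigenvalues $1^{k-1},-1$), since such reflections are exactly the images of transpositions and generate any conjugate of $\localFSymmetric{k}$. Both verifications are finite linear-algebra computations; and should this particular $\psi$ happen to carry an accidental extra symmetry, it can be eliminated by adding a generic $\localFSymmetric{k}$-invariant cubic, which preserves $\localFCyclic{k}$-invariance and non-$\localFSymmetric{k}$-invariance while shrinking $\mathrm{Lin}(\psi)$ to $\localFCyclic{k}$. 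A final minor point is the domain: since $\psi$ is a polynomial, each invariance identity $\psi\circ T=\psi$ can be read off on any open subset of the domain and then propagated to all of $\mathbb{R}^k$, so restricting to $[0,1]^k$ creates no difficulty.
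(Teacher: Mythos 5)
Your two-case skeleton (singular vs.\ invertible $M$) is sound, and your singular case is complete and correct: the directional-derivative identity $v\cdot\nabla\psi\equiv 0$, with the monomials $x_i^2$ and $x_ix_{i+1}$ all distinct precisely when $k\ge 3$, forces $v=0$. The gap is in the invertible case, which is the heart of the theorem. There you correctly reduce the problem to showing that $\mathrm{Lin}(\psi)$ contains no conjugate copy $M^{-1}\localFSymmetric{k}M$, equivalently that no involution with spectrum $(1^{k-1},-1)$ preserves $\psi=\sum_i x_i^2x_{i+1}$, but you never prove this; you defer it to a ``finite linear-algebra computation.'' It is not one: the condition $\psi\circ T=\psi$ together with $T^2=I$ and the spectrum constraint is a system of polynomial equations over a \emph{continuum} of matrices $T\in GL_k(\mathbb{R})$, and the theorem needs it to have no solution for \emph{every} $k\ge 3$, not for one fixed $k$ that a computer algebra check could settle. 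Equivalently (since the $\localFSymmetric{k}$-invariant cubics are spanned by the power sums $p_1^3,\,p_1p_2,\,p_3$), you would have to show that $\sum_i x_i^2x_{i+1}$ is not of the form $a\,p_1(Mx)^3+b\,p_1(Mx)\,p_2(Mx)+c\,p_3(Mx)$ for any invertible $M$ --- a genuine algebraic claim your sketch does not establish. The fallback (``add a generic $\localFSymmetric{k}$-invariant cubic'') is likewise only a plausibility statement whose genericity would itself need proof, and the infinitesimal-rigidity claim ($(Ax)\cdot\nabla\psi\equiv 0 \Rightarrow A=0$) used for finiteness of $\mathrm{Lin}(\psi)$ is also left unverified. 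As written, the invertible case is unproven.

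For contrast, the paper sidesteps symmetry-group classification entirely with a cardinality argument: choose $\psi$ assigning a \emph{distinct} value to each $\localFCyclic{k}$-orbit, so every level set of $\psi$ has at most $k$ points; a singular $M$ would make some level set infinite, while an invertible $M$ yields a point $z=Mx$ with pairwise distinct coordinates whose full $\localFSymmetric{k}$-orbit of size $k!$ is collapsed to one value by $\phi$, so some level set of $\phi\circ M$ has $k!>k$ points --- contradiction. That proof is short and complete, though its $\psi$ is necessarily discontinuous (no continuous function can separate all orbits); your route, were the missing step carried out, would yield the strictly more informative statement that even a smooth polynomial witness lies outside $\mathcal{A}_k$. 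To repair your argument, either perform the uniform-in-$k$ exclusion of reflection-type involutions, or replace the explicit cubic by the paper's orbit-counting device.
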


\textit{Proof. (Sketch)} We show the non-realizability of a $\localFCyclic{k}$-invariant function which has a unique value for each orbit. We have, $\Big|\mathcal{O}_{\localFCyclic{k}}(x)\Big| \leq k$. Suppose $\psi = \phi \circ M$, then  $M$ has to be invertible. Then, $\exists$ $\tilde{x}$ such that $\Big|\mathcal{O}_{\localFSymmetric{k}}(M \tilde{x})\Big| = k!$, which leads to a contradiction.

We now conjecture a similar result for $\mathbb{Z}_k$-invariant functions for $n \geq k \geq 3$. 

\begin{conjecture}
Consider the following set of functions, for $n \geq 3$ and $k \leq n$,
\begin{align*}
    \mathcal{A}_n = \Big \{\phi \circ M \big | M \text{ is a linear transformation and } \\ \phi \text{ is } 
    \localFSymmetric{n}-\text{invariant function} \Big \}.
\end{align*}
Then, $\exists$ a $\mathbb{Z}_k$-invariant function $\psi$ such that $\psi \notin \mathcal{A}_n$.
\end{conjecture}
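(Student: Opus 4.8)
The plan is to mirror the orbit-counting strategy of Theorem \ref{Z_k:S_k linear non-realisability}, now exploiting that $\localFSymmetric{n}$ acts on all $n$ coordinates while $\localFCyclic{k}$ acts only on the first $k$. First I would construct the witness $\psi$: since the set of $\localFCyclic{k}$-orbits and $\mathbb{R}$ both have the cardinality of the continuum, I can choose a $\localFCyclic{k}$-invariant $\psi$ that is \emph{injective across distinct orbits}, i.e. $\psi(x) = \psi(x')$ iff $x' \in \mathcal{O}_{\localFCyclic{k}}(x)$, recalling that $|\mathcal{O}_{\localFCyclic{k}}(x)| \le k$. I then assume for contradiction that $\psi = \phi \circ M \in \mathcal{A}_n$ with $\phi$ being $\localFSymmetric{n}$-invariant and $M$ a (square, $n\times n$) linear map.

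The first step is to show $M$ is invertible. If $v \in \ker M$ with $v \neq 0$, then $M(x+\alpha v) = Mx$, so $\psi(x+\alpha v) = \psi(x)$ for all $\alpha$; taking $x$ interior, $\psi$ is constant on the infinite admissible segment $\{x+\alpha v\}$, which by injectivity across orbits would have to lie in the single finite orbit $\mathcal{O}_{\localFCyclic{k}}(x)$ — impossible. Hence $M$ is invertible. The second step is conjugation plus counting: $\localFSymmetric{n}$-invariance of $\phi$ gives, for every $g \in \localFSymmetric{n}$, $\psi(M^{-1} g M x) = \phi(gMx) = \phi(Mx) = \psi(x)$, so $\psi$ is invariant under the conjugate group $H := M^{-1}\localFSymmetric{n}M \cong \localFSymmetric{n}$ acting linearly. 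Injectivity across $\localFCyclic{k}$-orbits then forces $H\cdot x \subseteq \mathcal{O}_{\localFCyclic{k}}(x)$. Choosing a generic $\tilde x$ so that $M\tilde x$ has pairwise distinct coordinates gives $|\mathcal{O}_{\localFSymmetric{n}}(M\tilde x)| = n!$, hence $|H\cdot \tilde x| = n!$, while $|\mathcal{O}_{\localFCyclic{k}}(\tilde x)| \le k \le n < n!$ for $n \ge 3$ — a contradiction, establishing $\psi \notin \mathcal{A}_n$.

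The main obstacle is that the counting in the second step is clean only when the functions live on all of $\mathbb{R}^n$, whereas the problem restricts the domain to the bounded set $X = [0,1]^n \setminus E$. Because $H$ acts by an arbitrary (adversarial) linear conjugation, the large orbit $H\cdot\tilde x$ — equivalently the pullback $M^{-1}\mathcal{O}_{\localFSymmetric{n}}(M\tilde x)$ — need not lie inside $[0,1]^n$, so only the orbit elements that remain in $X$ can be compared, and one must show that strictly more than $k$ of them survive for some admissible $\tilde x$. I expect this localization to the bounded domain (controlling how the conjugate symmetric orbit meets $[0,1]^n\setminus E$) to be the crux of turning the sketch into a proof.

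A second, more serious difficulty arises if $M$ is permitted to be a non-square \emph{lift} $\mathbb{R}^n \to \mathbb{R}^N$ with an $\localFSymmetric{N}$-invariant $\phi$ (the regime compared against \cite{karjol2023neural}): then the symmetric orbit of a point in $\mathrm{Im}(M)$ generally leaves $\mathrm{Im}(M)$, there is no pullback of the full orbit, and the pure counting argument collapses. Ruling out realizability in that setting would instead require a rank/dimension or invariant-theory argument relating the $n$-dimensional image of $M$ to the symmetric invariants of $\mathbb{R}^N$, which I view as the genuinely open part of the conjecture.
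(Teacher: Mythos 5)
The first thing to say is that the paper contains no proof of this statement: it is explicitly posed as a conjecture and left open, with only the special case of a square $k \times k$ matrix and an $\localFSymmetric{k}$-invariant $\phi$ proved (Theorem \ref{Z_k:S_k linear non-realisability}). So there is no paper proof to compare against, and your attempt must be judged on its own. Within that frame, your square-case argument is correct over the unbounded domain $\mathbb{R}^n$ and is a genuine extension of the paper's technique: the kernel argument forcing invertibility, the conjugation step $H = M^{-1}\localFSymmetric{n}M$ (which the paper's proof does not need, since there $\phi$ and $\psi$ live in the same dimension), and the count $n! > n \geq k \geq |\mathcal{O}_{\localFCyclic{k}}(\tilde x)|$ all go through. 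You are also right about where the open content lies: for a non-square lift $M$ from the domain of $\psi$ into $\mathbb{R}^n$ (the regime of \cite{karjol2023neural} and of the paper's own $\localFSymmetric{n^2}$-invariant backbone), the $\localFSymmetric{n}$-orbit of $Mx$ leaves $\mathrm{Im}(M)$, there is no pullback, and the counting collapses. Your proposal, as you candidly acknowledge, does not close that case, so it does not prove the conjecture --- but this is precisely why the authors could only conjecture it.

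One sharpening of your bounded-domain worry: on the paper's stated domain $X = [0,1]^n \setminus E$ it is not merely ``the crux,'' it falsifies the literal statement even for square $M$. Take $M = u\mathbf{1}^{\top} + \varepsilon I$ with $u = (1,2,\ldots,n)^{\top}$ and $0 < \varepsilon < 1$; its eigenvalues are $\varepsilon$ and $\varepsilon + n(n+1)/2$, so it is invertible, and for any $x \in [0,1]^n$, $x \neq 0$ (note $0 \in E$),
\begin{equation*}
(Mx)_{i+1} - (Mx)_i \;=\; \mathbf{1}^{\top}x + \varepsilon\left(x_{i+1} - x_i\right) \;\geq\; (1-\varepsilon)\,\mathbf{1}^{\top}x \;>\; 0,
\end{equation*}
so $M$ maps all of $X$ into the open chamber $\{z : z_1 < z_2 < \cdots < z_n\}$, a fundamental domain for $\localFSymmetric{n}$. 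Setting $\phi(z) = \psi\left(M^{-1}\operatorname{sort}(z)\right)$ when $\operatorname{sort}(z) \in M(X)$ and $\phi(z) = 0$ otherwise yields an $\localFSymmetric{n}$-invariant $\phi$ with $\phi \circ M = \psi$ on $X$ for an \emph{arbitrary} $\psi$, i.e., every function on $X$ lies in $\mathcal{A}_n$. Hence the conjecture is meaningful only over $\mathbb{R}^n$ (where a linear image cannot avoid crossing chamber walls) or with additional regularity constraints tying $\phi$ across chambers; your instinct that one must control how the conjugated symmetric orbit meets the domain was exactly the right worry, and in fact decides the bounded-domain reading outright.
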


By employing matrix-valued functions as in Theorem \ref{Z_k:S_k}, we gain additional flexibility, allowing us to overcome the above limitations.

\textbf{Canonical form:} 
The proposed architecture utilizes a common $\phi$ i.e., an  $\localFSymmetric{n^2}$-invariant network, while the work proposed in \cite{karjol2023neural} requires $\phi$ be modified depending on the subgroup type. Moreover, our framework yields a canonical form for our overall architecture, as illustrated for the $\localCyclic$ subgroup, given as:


\begin{align*}
        \phi \left(\left[\begin{array}{l}
\left(M_2 \circ \rho \circ M_1\right)(x) \\
\left(I-M_1\right) \left([x \quad  \bf{0}]\right) 
\end{array}\right]\right) \\= \mu \left ( \sum_{i_l \in I} \eta \left(x_{i_l},  x_{\tau \left(i_l \right)} \right) + C, 
Q  \right),
    \label{Canonical Zk}
\end{align*}
where $C = \left( n^2 - k \right) \eta\left(0, 0\right)$ (which is a constant), and $\mu$, $\eta$ denote specific functions and $Q = (I - M_1) [x \quad \bf{0}]$. This follows from the canonical form of $\phi$ as proved in \cite{zaheer2017deep}. Similar results can be obtained for $\localSymmetric$ and $\localDihedral$ subgroups. This allows for a  simple implementation of our architecture for various applications. 

\textbf{Handling non-divisors of $n$:}
We emphasize that the work proposed by \cite{karjol2023neural} for learning $\localCyclic$ (or $\localDihedral$) symmetries is applicable only when $k|n$. In contrast, our framework allows for the discovery of subgroups of type $\localCyclic$ (or $\localDihedral$) for any $| \mathcal{I} \big| = k \leq n$, thus allowing a larger class of subgroups.
\section{Experiments}
\label{Experiments}
We assess the performance of our proposed method in two representative tasks that have been considered in previous related work \cite{kicki2020computationally,zaheer2017deep,karjol2023neural}, one on synthetically generated data (polynomial regression) and the other on a real-world image dataset (image-digit sum) \footnote{While our theoretical results exclude the set $E$ (as defined in the problem statement) from the input domain, we have opted not to do so in our experiments, considering that $E$ is a set with measure zero.}. We discuss additional experiments and potential applications in the appendix section.

\subsection{Polynomial Regression}
In this task, we conduct the model training to learn a $G$-invariant polynomial as studied in \cite{kicki2020computationally}. For example, with $n = 5, k = 4$; $f(x) = x_1x_2x_3x_4 + x_5$ is an $\localFSymmetric{4}$-invariant polynomial function. Note that we also study numerous polynomials of various degrees and give detailed definitions of the polynomials in the supplementary section. To examine the generalization abilities of the proposed method we use only $64$ randomly generated points in $[0, 1]$ for training, whereas use $480$ and $4800$ points for validation and test sets respectively.

\begin{table*}
\small
\parbox{.45\linewidth}{
\begin{tabular}{|l|l|c|}
\hline
Task & $G$  & Accuracy  \\
\hline
\textit{Polynomial Regression} &$\localCyclic$ &100 \\
\textit{Polynomial Regression} &$\localDihedral$  &100 \\
\textit{Image-Digit Sum} &$\localSymmetric$ &100 \\
\hline
\end{tabular}
\renewcommand\thetable{(1.a)}
\caption{Accuracy $(\%)$}
\label{Tab:Accuracy}
}
\parbox{.45\linewidth}{
\begin{tabular}{|l|c|c|c|c|}
\hline
$G$ & $\localCyclic (5)$ & $\localCyclic (7)$ & $\localDihedral (5)$ & $\localDihedral (7)$ \\
\hline
$\localCyclic$ &\textbf{4.2}  &\textbf{6.1}  &8.2  &15.2 \\
$\localDihedral$    &4.7  &7.9  &\textbf{6.3}  &\textbf{10.1} \\
$\localSymmetric$    &11.7  &18.5  &21.3 &34.3 \\
\textit{$M + H$-INV}  &12.3  &-  &23.2  &- \\
\textit{SGD}  &14.4  &17.7  &26.5  &34.4 \\
\hline
\end{tabular}
\renewcommand\thetable{(1.b)} 
\caption{MAE $(\times 10^{-2})$}
\label{Tab:MAE values}
}

\renewcommand\thetable{(1)}
\caption{\textbf{(a)} Estimation accuracy (top 3) for subgroup discovery in polynomial regression and image-digit sum tasks. \textbf{(b)} Mean absolute error ($\times 10^{-2}$) for the regression task with $\localCyclic$ and $\localDihedral$ subgroups. The cardinality ($k = |\mathcal{I}|$) of the index set is given in braces. The first three rows display the top $3$ bandit arm subgroups, with the actual subgroup results highlighted in bold. The $M + H$-INV (only applicable for $k | n$) represents the subgroup discovery method proposed by \cite{karjol2023neural}, which incorporates a composite of linear transformations and an $H$-invariant network. Here, $H \leq \localFSymmetric{n}$ is dependent on the underlying subgroup. The last row represents the proposed architecture entirely trained with SGD.}
\end{table*}

\subsection{Image-Digit Sum}
\vspace{-0.25cm}
The goal of this task is to learn the function representing the sum of  digit labels  of $k$ (out of $n$) images. An input is a set of $n$ images of dimension $28 \times 28$ taken from MNISTm dataset (\cite{loosli2007training}). Using the proposed bandit setting, we discover the underlying subgroup (in this case $\localSymmetric$). Note that, $x_i$ is an image (or $2$D matrix), instead of scalar element.    
\vspace{-0.25cm}
\begin{figure}[htp]
    \centering
    \includegraphics[width=\columnwidth, keepaspectratio, draft=false,  trim={6cm, 3cm, 5cm, 6cm}, clip]{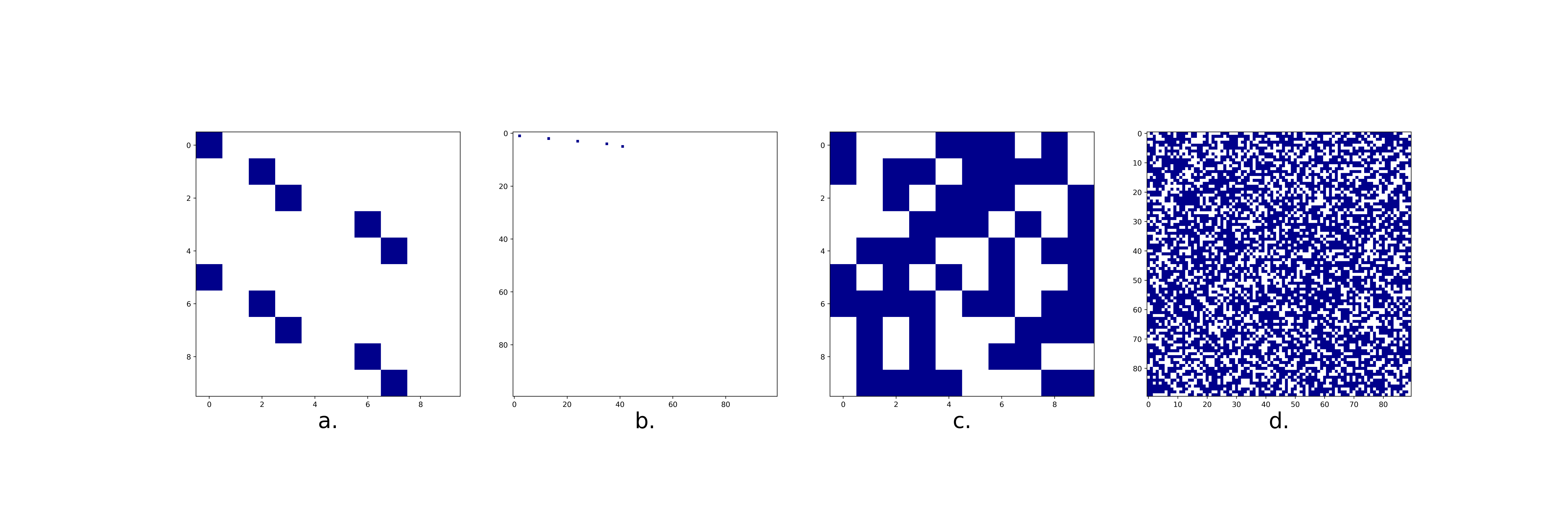}
    \caption{Visualization of the reference (bandit) $M_1$ ({\color{red}{a}}) and $M_2$ ({\color{red}{b}}) matrices, as well as those ({\color{red}{c}}, {\color{red}{d}}) obtained through training our method entirely using SGD for the task of  polynomial regression of $\localCyclic$-invariant function, with $n=10$ and $\mathcal{I} = \{0, 2, 3, 6, 7\}$.}
    \label{fig:visualization}
\end{figure}

\subsection{Results}
\vspace{-0.25cm}
Table \ref{Tab:Accuracy} presents the accuracies achieved in subgroup discovery tasks for image-digit sum ($\localSymmetric$) and polynomial regression ($\localCyclic$ and $\localDihedral$). The reported accuracies correspond to different values of $k$ within the range $[n]$, where $n=10$, and are based on randomly selected index sets $\mathcal{I}$. These accuracies indicate the successful identification of the underlying subgroup within the top 3 bandit arms, as determined by the final $\hat{\mu}$. The training process achieves this outcome within $T = O(n)$ iterations.

In Table \ref{Tab:MAE values}, the top 3 bandit arms corresponds to the best three arms returned by the LinTS algorithm. We note that, in each case the top 3 results is the $S_I, \localCyclic$ or $D_I$ for the correct index set $\mathcal{I}$.

For the polynomial regression task, we also provide the mean absolute error (MAE) values for the top 3 bandit arms obtained. Notably, the MAE corresponding to the actual subgroup is the lowest, indicating successful discovery of the actual subgroup within the top 3. It is worth mentioning that the loss values observed for $\localCyclic$ and $\localDihedral$ subgroups are relatively close, as the only additional group symmetries are the reflections. In addition, we consider the proposed architecture entirely trained with SGD. Our results consistently demonstrate a significant performance improvement over the SGD method across all investigated subgroups in the polynomial regression tasks. Furthermore, we compare our approach with the subgroup discovery method proposed by \cite{karjol2023neural}, which combines linear transformations and an invariant network specifically designed for each subgroup type.


\subsection{Interpretability}
\vspace{-0.25cm}
Bandit sampling inherently yields interpretable outcomes, and an illustrative example $(M_1, M_2)$ of this is demonstrated in Figure \ref{fig:visualization} (a, b). Conversely, training our method solely using SGD results in matrices that lack clear characterization of the underlying subgroup, as depicted in Figure \ref{fig:visualization} (c, d).



\subsection{Limitations and Conclusion}
\vspace{-.25cm}
This work introduces a novel framework for the discovery of discrete symmetry groups. We employ neural architectures trained using a combination of gradient descent and bandit sampling, resulting in interpretable outcomes. Through experiments on both synthetic and real-world datasets, we demonstrate the effectiveness of our approach. It is important to note that this work primarily focuses on theoretical aspects and serves as a proof of concept. In the future, we plan to explore similar approaches for addressing continuous groups and their corresponding applications.



\onecolumn
\vspace{1.5cm}
\aistatstitle{Supplementary Materials}
\vspace{-8cm}
\section{Appendix}
The Appendix Section is organized as follows:-
\begin{itemize}
    \item In Section \ref{Illustration}, we provide an illustration of the proposed method for $G=\localCyclic$ invariance  with $n=4$ and $\mathcal{I} = \{1,2,4\}$. We discuss multi-armed bandits and potential applications in Section \ref{Multi-Armed Bandits} and \ref{Molecular Properties} respectively.
    \item In Section \ref{Additional Experiments}, we discuss additional experiments for convex area estimation and polynomial regression tasks.
    \item In Section \ref{Complete Proofs}, we provide complete proofs for our results with additional theoretical results.
\end{itemize}

\section{Illustration}
\label{Illustration}
\begin{figure}[htp!]
    \centering
    \includegraphics[width= \columnwidth, keepaspectratio, draft=false]{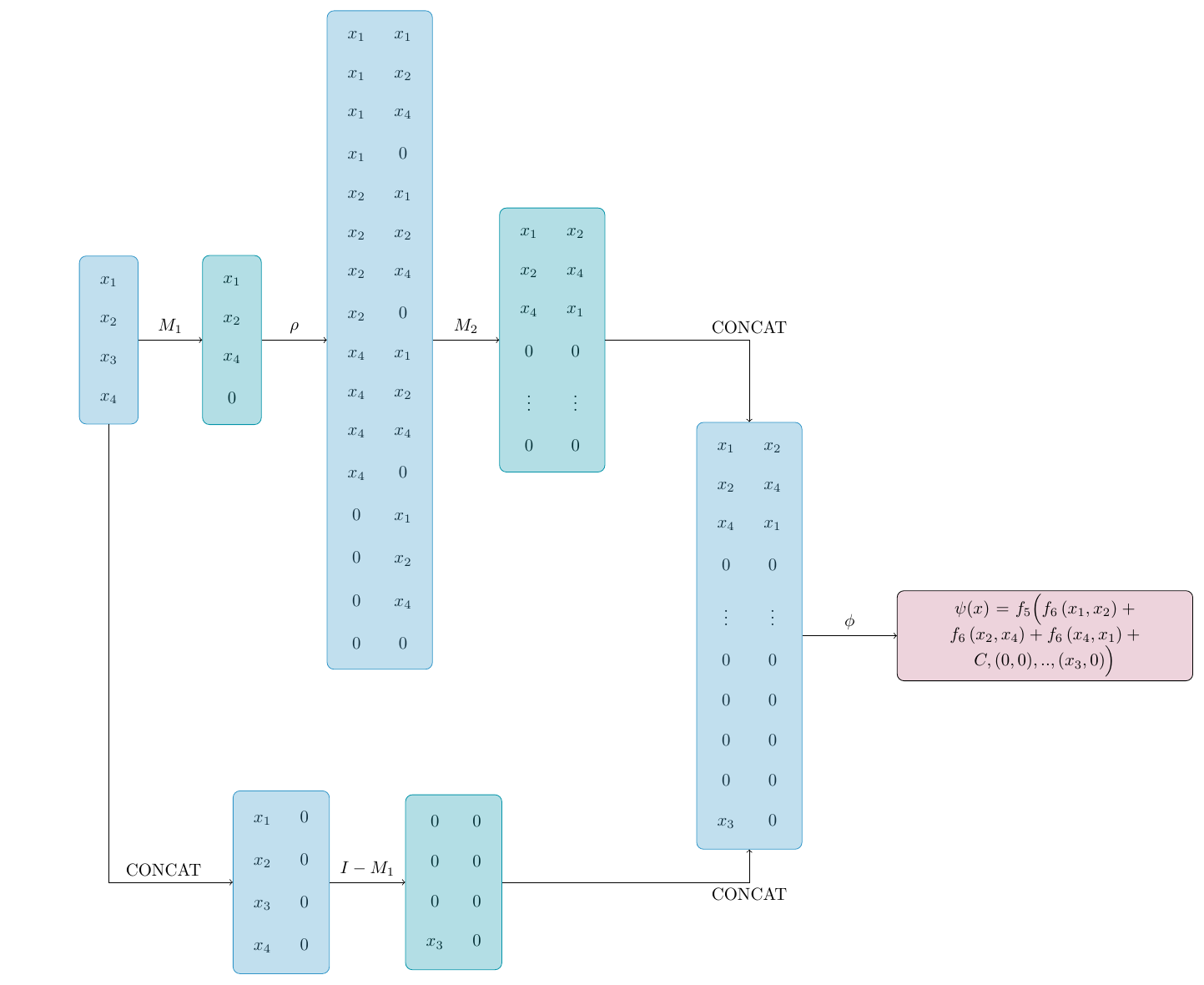}
    \caption{Illustration of the proposed method for $G=\localCyclic$ with $n=4$ and $\mathcal{I} = \{1,2,4\}$. Here $f_5, f_6$ denote some appropriate functions which will be approximated using neural networks.}
    \label{fig:illustration_Z2k}
\end{figure}

\section{Multi-Armed Bandits}
\label{Multi-Armed Bandits}
The Multi-Armed Bandit (MAB) framework is a classical approach for sequential decision-making problems, in which an agent $\mathcal{A}$ selects actions (arms) to minimize the total regret given by $R_T = T \lambda^*- \mathbb{E} \left[\sum_{t=1}^T R_t\right]$  where $\lambda^*$ is the  mean reward  of the optimal arm.

Thompson sampling is a Bayesian approach to the multi-armed bandit problem. It works by sampling from a posterior distribution over the expected rewards of each arm, and then selecting the arm with the highest sampled reward. The posterior distribution is updated after each round of play, based on the observed rewards. In this setting, each arm (action) is associated with a context or feature vector \(x\), and the goal is to learn a linear model that predicts the expected reward for each arm given its context. Let $X_t$ be the context vector at time $t$, $A_t$ be the chosen arm at time $t$, and $R_t$ be the observed reward at time \(t\). The algorithm assumes a prior distribution over the model parameters $\mu$ (e.g., multivariate Gaussian distribution). At each iteration, Thompson Sampling samples a parameter vector \(\mu\) from the posterior distribution. Then, it estimates the expected reward for each arm by computing the inner product between the sampled \(\mu\) and the corresponding context vector \(x\). The arm with the highest estimated reward is chosen and pulled. After observing the reward, the posterior distribution is updated using Bayesian inference to obtain a new posterior distribution, taking into account the new data. This update process is typically performed using conjugate priors or approximate methods like Markov Chain Monte Carlo (MCMC) or variational inference. The algorithm continues to update the posterior distribution and select arms based on the sampled parameters, enabling it to learn the optimal policy in a contextual bandit setting.

Thompson Sampling has been proven to be asymptotically optimal, meaning that as $T \rightarrow \infty$, the regret of the algorithm is bounded by a logarithmic function of \(T\). Formally, it has been shown that $\lim \limits_{T \to \infty} \frac{R_T}{T} = 0 $, where $R_T$ represents the regret after $T$ rounds. This result guarantees that over time, Thompson Sampling converges to the optimal arm and achieves maximum total reward. The logarithmic regret bound demonstrates the efficiency of the algorithm in balancing exploration and exploitation, leading to near-optimal performance in the long run.

\section{Additional Experiments}
\label{Additional Experiments}
\begin{table}[htp!]
\caption{Estimation Accuracy $(\%)$}
\vspace{0.1em}
\centering
\begin{tabular}{|l|l|c|}
\hline
Task & $G$  & Accuracy  \\
\hline
\textit{Convex Area} &$\localDihedral$ &100 \\
\textit{$\localSymmetric$ (4)} &$\localSymmetric$  &100 \\
\hline
\end{tabular}
\renewcommand\thetable{(1.a)}
\label{Tab:New Accuracy}
\end{table}
Table \ref{Tab:New Accuracy} presents the accuracies (top 3) achieved in subgroup discovery tasks on two tasks: (i) convex quadrangle area estimation. (ii) $\localSymmetric$-invariant polynomial regression. The cardinality ($k = |\mathcal{I}|$) of the index set is given in braces. 

\textit {Convex area estimation}. In this task, we estimate the area of convex quadrilaterals which are invariant to cyclic shifts and reflections of the input coordinates, i.e., a $\localDihedral$-invariant function ($|\mathcal{I}| = 4$). The input is the $(x, y)$ coordinates of the four points of the quadrilateral lying in $\mathbb{R}^{4 \times 2}$. The training data consists of 256 examples (randomly generated convex quadrangles with their areas), while the validation dataset contains 1024 examples. Note that, the coordinates are randomly sampled from $[0, 2]$ and the area takes value in $(0, 1]$ respectively.

\textit{Polynomial regression}. Here, we consider $\localSymmetric$-invariant polynomial regression task. The training dataset consists of 64 randomly generated data points in $[0,1]$, whereas 480 points were used for the validation set.

For all our experiments, we observe the subgroup discovery in $O(n)$ iterations. At each iteration, we run the model for $400$ epochs ($3$ for image-digit sum) with batch size of $16$ and decaying learning rate schedule on \textit{NVIDIA A6000 GPU's}. We report the accuracy obtained across $5$ trails with different index set $I$.

\begin{table}[htp!]
\caption{Definition of Polynomials} 
\label{tab:Poly}
\begin{center}
\resizebox{0.7\columnwidth}{!}{\begin{tabular}{|l| l|}
\hline 
\textbf{INVARIANCE}  &\textbf{POLYNOMIAL}  \\
\hline
$\localSymmetric$ (4)    &$x_{1}x_{2}x_{3}x_{4} + x_{5}$\\
$\localCyclic$ (5)    &$x_1x_2^2 + x_2x_3^2 + x_3x_6^2 + x_6x_7^2 + x_7x_1^2$		  \\
$\localCyclic$ (7)    &$x_1x_2^2 + x_2x_3^2 + x_3x_6^2 + x_6x_7^2 + x_7x_9^2 + x_9x_{10}^2 + x_{10}x_1^2$		  \\
$\localDihedral$ (5)    &$x_1x_2^2 + x_2x_3^2 + x_3x_6^2 + x_6x_7^2 +  x_7x_1^2 +  x_1x_7^2 +  x_7x_6^2 + x_6x_3^2  + x_3x_2^2  + x_2x_1^2$		\\
$\localDihedral$ (7)    &$x_1x_2^2 + x_2x_3^2 + x_3x_6^2 + x_6x_7^2 + x_7x_9^2 + x_9x_{10}^2 + x_{10}x_1^2 + x_{1}x_{10}^2 + ... + x_2x_1^2$		\\
\hline
\end{tabular}}
\end{center}
\end{table}
Table (6): The exact definitions of the polynomials used in experiments is given in Table \ref{tab:Poly}. For $\localCyclic$ and $\localDihedral$ the input is a vector in $[0, 1]^{10}$ given as; $x = [x_{1}, x_{2}, ..., x_{10}]$ whereas for $\localSymmetric$ it is a vector in $[0, 1]^{5}$ given as; $x = [x_{1}, x_{2}, ..., x_{5}]$. In this example, the index set $\mathcal{I}$ is chosen to be $[1, 2, 3, 4]$, $[1, 2, 3, 6, 7]$, and $[1, 2, 3, 6, 7, 9, 10]$ respectively.


\section{Potential applications: Molecular Properties}
\label{Molecular Properties}
In our research, we introduce a novel framework for the discovery of discrete invariance in functions, particularly concerning their behavior under a set of discrete symmetries. One compelling application of this framework emerges in the domain of molecular properties and their underlying symmetries. Consider a scenario where a collection of molecules exhibits a shared property, and it is hypothesized that this property is rooted in the presence of a common point group or discrete symmetry group (\cite{carter1997molecular}). Our framework can serve as a powerful tool to discover this common point group and explore this hypothesis.

However, it is essential to note that the successful application of our framework necessitates the proper representation of molecules in terms of graphs or other suitable data structures. Additionally, we advocate the construction of backbone-invariant neural networks, such as $\phi$, tailored to these data structures, specifically designed to withstand certain symmetry transformations (known as symmetry elements). This prerequisite forms a distinct yet intriguing avenue of research, wherein our framework for symmetry discovery plays a pivotal role. By leveraging our method, researchers can effectively tackle the challenging task of identifying and understanding the discrete symmetries that underlie molecular properties, promising significant advancements in the fields of chemistry, materials science, and drug discovery.

\section{Complete Proofs and Additional Theoretical Results}
\label{Complete Proofs}
\begin{proposition}[Cayley's Theorem]
\label{prop:cayley} 
Let $G$ be a group, and let $H$ be a subgroup. Let $G/H$ be the set of left cosets of $H$ in $G$. Let $N$ be the normal core of $H$ in $G$, defined to be the intersection of the conjugates of $H$ in $G$. Then the quotient group  $G/N$ is isomorphic to a subgroup of $Sym(G/H)$. More specifically, it states that every group $G$ is isomorphic to a subgroup of the symmetric group.
\end{proposition}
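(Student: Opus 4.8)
The plan is to realize $G$ as permutations of the coset space $G/H$ via the natural left-multiplication action, and then read off the kernel as the normal core, so that the first isomorphism theorem delivers the claimed embedding of $G/N$. The ``more specifically'' clause will follow as the special case $H = \{e\}$.

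First I would define the map $\lambda : G \to Sym(G/H)$ by letting $\lambda(g)$ be the permutation of $G/H$ that sends $aH \mapsto gaH$. I would check that this is well-defined: if $aH = bH$, then $a^{-1}b \in H$, and since $(ga)^{-1}(gb) = a^{-1}b \in H$ we get $gaH = gbH$. Each $\lambda(g)$ is a bijection of $G/H$ with inverse $\lambda(g^{-1})$, and the homomorphism identity $\lambda(gg') = \lambda(g)\lambda(g')$ follows directly from associativity of the group operation. Hence $\lambda$ is a group homomorphism into $Sym(G/H)$.

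Next I would compute $\ker \lambda$. An element $g$ lies in the kernel precisely when $gaH = aH$ for every $a \in G$, equivalently $a^{-1}ga \in H$ for all $a$, equivalently $g \in aHa^{-1}$ for all $a$. Therefore $\ker \lambda = \bigcap_{a \in G} aHa^{-1} = N$, the normal core of $H$ in $G$. Applying the first isomorphism theorem to $\lambda$ then yields $G/N \cong \operatorname{im}(\lambda) \leq Sym(G/H)$, which is the first assertion.

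Finally, for the classical statement I would specialize to $H = \{e\}$: then $G/H$ is in natural bijection with $G$ itself, the normal core $N$ is the trivial subgroup, and the isomorphism above becomes $G \cong \operatorname{im}(\lambda) \leq Sym(G)$, exhibiting every group as a subgroup of a symmetric group. The only point requiring care is the bookkeeping in the kernel computation, in particular keeping the direction of conjugation consistent so that the intersection of conjugates comes out exactly as the stated normal core; but this is routine rather than a genuine obstacle, and the proof is otherwise a direct application of the orbit/coset action together with the first isomorphism theorem.
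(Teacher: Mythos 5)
Your proof is correct and complete: the left-multiplication action $\lambda : G \to Sym(G/H)$, the identification $\ker\lambda = \bigcap_{a \in G} aHa^{-1} = N$, and the first isomorphism theorem give exactly the stated embedding $G/N \hookrightarrow Sym(G/H)$, with the classical Cayley statement recovered at $H = \{e\}$. The paper states this proposition as standard background without supplying a proof, and yours is precisely the canonical argument that the statement rests on, so there is nothing to reconcile.
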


\subsection{Proof of Theorem \ref{Z_k:S_k}}
\thmZkSk*
\begin{proof}
    \textbf{Step 1}: First, we show that the $\rho : X \rightarrow \mathbb{R}^k$ is an injective function, where $X = [0,1]^k$. Suppose $\rho(x) = \rho(y)$, for some $x = [x_1, x_2, \dots x_k]^T$ and $y = [y_1, y_2, \dots y_k]^T$. Then,
    \begin{equation}        
        \begin{bmatrix}
            x_1, &x_2 \\
            x_2 &x_3 \\
            \vdots &\vdots \\
            x_k &x_1
        \end{bmatrix}
        = 
        \begin{bmatrix}
        y_1 &y_2 \\
        y_2 &y_3 \\
        \vdots &\vdots \\ 
        y_k &y_1
        \end{bmatrix},
    \end{equation}
    thus,
    \begin{equation}
        (x_1, x_2) = (y_1, y_2), \;
        (x_2, x_3) = (y_2, y_3), \;
        \dots, \;
        (x_{k-1}, x_k) = (y_{k-1}, y_k), \;
        (x_k, x_1) = (y_k, y_1). 
    \end{equation}

    Thus, we get, $x_i = y_i, \;  \forall i \in [k]$. Hence, $\rho$ is injective.

    In addition, $\rho^{-1} : \rho(X) \rightarrow X$ is given by 
    \begin{equation}
        \rho^{-1}\left(        \begin{bmatrix}
            x_1, &x_2 \\
            x_2 &x_3 \\
            \vdots &\vdots \\
            x_k &x_1
        \end{bmatrix} \right) =         \begin{bmatrix}
            x_1 \\
            x_2 \\
            \vdots  \\
            x_k
        \end{bmatrix}.
        \label{eq:rho_inv}
    \end{equation}
    \textbf{Step 2}: It is obvious to see that $\rho$ is a $\localFCyclic{k}$-equivariant function, i.e.,
    \begin{equation}
        \rho(h \cdot x) = h \cdot \rho(x), \quad \forall h \in \localFCyclic{k}
    \end{equation}
    
    \textbf{Step 3}:
    We now show that, for any $g \in \localFSymmetric{k}$, $g \cdot \rho(x) \in \text{ Im}(\rho)$ if and only if $g \cdot \rho(x) = h \cdot \rho(x)$ for some $h \in \localFCyclic{k}$. In other words, any permutation (row wise) of  $\rho(x)$ correspond to some cyclic shift of $\rho(x)$.

    From Step $2$, we get that, if $g \in \localFCyclic{k}$, then $g \cdot \rho(x) = \rho(g \cdot x) $. Thus, $g \cdot \rho(x) \in Im(\rho)$.

    Suppose $g \cdot \rho(x) \in Im(\rho)$ for some $g \in \localFSymmetric{k}$.
    Since $ \rho(x) \in Im(\rho)$, we have
    \begin{align}
        \rho(x) &=         
        \begin{bmatrix}
            x_1, &x_2 \\
            x_2 &x_3 \\
            \vdots &\vdots \\
            x_k &x_1
        \end{bmatrix} \nonumber \\
        g \cdot \rho(x) &=  
        \begin{bmatrix}
        x_{g(1)} &x_{\tau(g(1))}) \\
        x_{g(2)} &x_{\tau(g(2))}) \\
        \vdots &\vdots  \\
        x_{g(k)} &x_{\tau(g(k))}          
        \end{bmatrix} \label{eq:e1} \\
        \rho^{-1} (g \cdot \rho(x)) &= 
        \begin{bmatrix}
            x_{g(1)} \\
            x_{g(2)} \\
            \vdots  \\
            x_{g(k)}
        \end{bmatrix} \quad \left( g \cdot \rho(x) \in Im(\rho) \text{ and applying \eqref{eq:rho_inv}} \right)
        \nonumber \\
        \rho(\rho^{-1} (g \cdot \rho(x))) = g \cdot \rho(x) &= 
        \begin{bmatrix}
            x_{g(1)} & x_{g(2)} \\
            x_{g(2)} &x_{g(3)} \\
            \vdots &\vdots \\
            x_{g(k)} &x_{g(1)}
        \end{bmatrix} 
        \label{eq:e2}
    \end{align}
    where  $\tau$  is cyclic shift operator defined as $\tau(j) = (j \mod k) + 1$.
    
From eq. \eqref{eq:e1} and \eqref{eq:e2}, (substituting $w = g(1)$), we get,
\begin{equation}
        g \cdot \rho(x) 
        =
        g \cdot \begin{bmatrix}
        x_1 &x_2 \\
        x_2 &x_3 \\
        \vdots &\vdots \\
        x_k &x_1
        \end{bmatrix}
        =
        \begin{bmatrix}
        x_{w} &x_{\tau(w)} \\
        x_{\tau(w)} &x_{\tau^2(w)} \\
        \vdots &\vdots \\
        x_{\tau^{k-1}(w)} &x_{\tau^k(w)}
    \end{bmatrix},
\end{equation}
which is nothing but cyclic shift of $\rho(x)$. Thus,  $g \cdot \rho(x) = h \cdot \rho(x)$ for some $h \in \localFCyclic{k}$.

    \textbf{Step 4}: Claim: The following map is injective:
    \begin{equation}
        \mathcal{O}_{\localFCyclic{k}}(x) \mapsto \mathcal{O}_{\localFSymmetric{k}}\left( \rho(x)\right)
    \end{equation}
    First we will show that, this map is well-defined. Suppose, $y \in \mathcal{O}_{\localFCyclic{k}}(x)$, then $\mathcal{O}_{\localFCyclic{k}}(y) = \mathcal{O}_{\localFCyclic{k}}(x)$ and $y = h \cdot x$ for some $h \in \localFCyclic{k}$.
    \begin{align}
        \implies \mathcal{O}_{\localFSymmetric{k}}\left( \rho(y)\right) &=  \mathcal{O}_{\localFSymmetric{k}}\left( \rho(h \cdot x)\right) & \;\nonumber \\
        & = \mathcal{O}_{\localFSymmetric{k}}\left( h \cdot \rho(x)\right) \quad &\text{(from step 2)} \nonumber \\
        & = \mathcal{O}_{\localFSymmetric{k}}\left( \rho(x)\right) \quad &\text{(from the definition of orbit).}
    \end{align}
Hence, the map is well-defined. 

Suppose, $\mathcal{O}_{\localFSymmetric{k}}\left( \rho(x)\right) = \mathcal{O}_{\localFSymmetric{k}}\left( \rho(y)\right)$ for some $x, y \in [0,1]^k$, then
\begin{align}
    \rho(y) &\in \mathcal{O}_{\localFSymmetric{k}}\left( \rho(x)\right) \quad &\text{(from the definition of orbit)} \nonumber \\
    \rho(y) &= g \cdot \rho(x) \quad &(\text{for some } g \in \localFSymmetric{k}) \nonumber \\
    g \cdot \rho(x) &\in \; Im \left(\rho \right) & \; \nonumber \\
    g &\in \localFCyclic{k} \quad &(\text{from step 3}) \nonumber \\
    \rho(y) &= g \cdot \rho(x) = \rho \left(g \cdot x \right) \quad &(\text{from step 2}) \nonumber \\
    y &= g \cdot x \quad &(\text{from step 1}) \nonumber \\
    y &\in \mathcal{O}_{\localFCyclic{k}}(x) \nonumber & \; \\
    \mathcal{O}_{\localFCyclic{k}}(y) &= \mathcal{O}_{\localFCyclic{k}}(x). &
\end{align}
    This implies that  each $\mathcal{O}_{\localFCyclic{k}}(x)$ orbit is uniquely mapped to $\mathcal{O}_{\localFSymmetric{k}}(\rho(x))$.  From this, it follows that by defining the $\localFSymmetric{k}$-invariant function $\phi$ to take the same value across any orbit of the form $\mathcal{O}_{\localFSymmetric{k}}(\rho(x))$ as $\psi$ does across the orbit $\mathcal{O}_{\localFCyclic{k}}(x)$ (and an arbitrary value across orbits not of the form $\mathcal{O}_{\localFSymmetric{k}}(\rho(x))$), we obtain the result. 
\end{proof}

\subsection{Additional Theoretical Results}
\begin{theorem}
    \label{D2k_S2k}
    Let $\psi: X \rightarrow \mathbb{R} $ be $\localFDihedral{k}$-invariant. There exists an $\localFSymmetric{2k}$-invariant function $\phi : [0,1]^{2k \times 2} \rightarrow \mathbb{R}$ and $\rho: X \rightarrow [0,1]^{2k \times 2}$,  such that 
    \begin{equation}
        \label{thm:d2k}
        \psi = \phi \circ \rho,
    \end{equation}
    where $\rho$ is defined as, 
    \begin{align}
        \begin{bmatrix}
        x_1 \\
        \vdots \\
        x_k
        \end{bmatrix} 
        &\mapsto
        \begin{bmatrix}
        x_1 &x_2 \\
        x_2 &x_1 \\
        x_2 &x_3 \\
        x_3 &x_2 \\
        \vdots &\vdots\\
        x_k &x_1 \\
        x_1 &x_k
        \end{bmatrix}
        \label{rho def D_2k}
    \end{align}
    
\end{theorem}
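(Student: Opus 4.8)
The plan is to mirror, step for step, the proof of Theorem~\ref{Z_k:S_k}, the only structural change being that the target group is now $\localFSymmetric{2k}$ rather than $\localFSymmetric{k}$, and that the matrix-valued $\rho$ of \eqref{rho def D_2k} records \emph{both} orientations $(x_i,x_{\tau(i)})$ and $(x_{\tau(i)},x_i)$ of each cyclic edge. Exactly as before, I would establish (i) injectivity of $\rho$ together with an explicit inverse on $\mathrm{Im}(\rho)$, (ii) $\localFDihedral{k}$-equivariance of $\rho$, (iii) the key claim that the only row-permutations in $\localFSymmetric{2k}$ preserving $\mathrm{Im}(\rho)$ are those induced by $\localFDihedral{k}$, and (iv) that the assignment $\mathcal{O}_{\localFDihedral{k}}(x)\mapsto\mathcal{O}_{\localFSymmetric{2k}}(\rho(x))$ is a well-defined injection; the function $\phi$ is then defined orbitwise to agree with $\psi$, and arbitrarily off $\mathrm{Im}(\rho)$.

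Steps (i) and (ii) are routine. For injectivity, note that the odd-indexed rows of $\rho(x)$ are exactly $(x_1,x_2),(x_2,x_3),\dots,(x_k,x_1)$, which already determine every $x_i$; this simultaneously yields $\rho^{-1}$ on $\mathrm{Im}(\rho)$. For equivariance, I would check that each generator of $\localFDihedral{k}$ acts on $\rho(x)$ by permuting its $2k$ rows: the rotation $\pi$ cyclically shifts the $k$ orientation-pairs, while the reflection $\sigma$ swaps the two rows inside each pair and reorders the pairs, since reversing the cycle sends forward edges to backward edges and vice versa. This exhibits $\localFDihedral{k}$ as a subgroup of $\localFSymmetric{2k}$ acting on the $2k$ directed edges, giving $\rho(h\cdot x)=h\cdot\rho(x)$ for all $h\in\localFDihedral{k}$.

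Step (iii) is the main obstacle and the one genuinely new ingredient relative to the cyclic case. I would phrase it as follows: because $x\in X$ has pairwise-distinct coordinates (recall $X$ omits the set $E$), the rows of $\rho(x)$ are precisely the $2k$ directed edges of the cycle on the distinct values $x_1,\dots,x_k$ traversed in the order $x_1\!-\!x_2\!-\!\cdots\!-\!x_k\!-\!x_1$. If $g\cdot\rho(x)=\rho(y)$ for some $g\in\localFSymmetric{2k}$ and some $y$, then $g$ carries the directed-edge set of one cycle onto that of another while preserving the (distinct) vertex values as a set; hence the induced vertex map is an automorphism of the cycle graph $C_k$, and each undirected edge still occurs with both orientations. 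Since $\mathrm{Aut}(C_k)$ is exactly the dihedral group of order $2k$, the permutation $g$ must act as an element of $\localFDihedral{k}$, i.e.\ $g\cdot\rho(x)=h\cdot\rho(x)$ for some $h\in\localFDihedral{k}$; this is the dihedral analogue of Step~3 of Theorem~\ref{Z_k:S_k}, where recording only a single orientation admitted rotations alone. With (iii) in hand, Step (iv) is identical in form to the cyclic case: well-definedness follows from (ii), and injectivity follows by running (iii), (ii) and (i) in sequence to conclude $y\in\mathcal{O}_{\localFDihedral{k}}(x)$ from $\mathcal{O}_{\localFSymmetric{2k}}(\rho(x))=\mathcal{O}_{\localFSymmetric{2k}}(\rho(y))$. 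Defining $\phi$ to be constant on each $\localFSymmetric{2k}$-orbit, with value $\psi(\rho^{-1}(\cdot))$ on orbits meeting $\mathrm{Im}(\rho)$, then yields $\psi=\phi\circ\rho$.
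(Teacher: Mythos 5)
Your proposal is correct, and its skeleton (injectivity of $\rho$, equivariance, the image-preservation claim, then the orbit-to-orbit injection and orbitwise definition of $\phi$) is exactly the paper's. The one place you genuinely diverge is step (iii). The paper proves it by hand: it conditions on whether the first row of $g\cdot\rho(x)$ equals an odd-indexed row $\rho(x)[2u-1]$ (a ``forward'' edge) or an even-indexed row $\rho(x)[2u]$ (a ``backward'' edge), and then, using the pairwise distinctness of the coordinates (exclusion of $E$), chains row by row to reconstruct $g\cdot\rho(x)$ explicitly as $\rho\bigl([x_u, x_{\tau(u)},\dots,x_{\tau^{k-1}(u)}]^T\bigr)$ in the first case (so $h$ is a rotation) or as $\rho(\tilde h\cdot x)$ with $\tilde h$ a reflection followed by a rotation in the second. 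You instead identify the rows of $\rho(x)$ with the $2k$ directed edges of the cycle on the distinct values $x_1,\dots,x_k$, observe that $g\cdot\rho(x)=\rho(y)$ forces the two traversals to be Hamiltonian cycles of the same graph, and invoke $\mathrm{Aut}(C_k)\cong D_{2k}$. Both arguments are sound and both rest on the same use of $X=[0,1]^k\setminus E$; yours is shorter and makes conceptually transparent why recording both orientations enlarges the admissible permutations from rotations (the $\localFCyclic{k}$ case) to the full dihedral group, while the paper's is elementary and constructive, exhibiting the element $h\in\localFDihedral{k}$ explicitly rather than through a quoted graph-theoretic fact. One minor caveat you should state: the identification $\mathrm{Aut}(C_k)\cong\localFDihedral{k}$ requires $k\geq 3$ (for $k=2$ the rows of $\rho(x)$ are no longer distinct and the statement degenerates, though there $\localFDihedral{k}$ coincides with $\localFSymmetric{k}$ anyway); the paper's row-chaining proof carries the same implicit assumption.
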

\begin{proof}
As discussed in Theorem \ref{Z_k:S_k}, the goal is to map each of the $\localFDihedral{k}$-orbit in the input domain $X$ uniquely to a $\localFSymmetric{2k}$-orbit in $\rho(X)$. 

\textbf{Step 1:} First, we show that $\rho$ is injective.
Suppose $\rho(x) = \rho(y)$ for some 
\begin{equation}
x = \begin{bmatrix}
    x_1 \\
    x_2 \\
    \vdots \\
    x_k
\end{bmatrix}, 
y = \begin{bmatrix}
    y_1 \\
    y_2 \\
    \vdots \\
    y_k
\end{bmatrix}
.
\end{equation}

\text{Then}, 
\begin{equation}
\begin{bmatrix}
        x_1 &x_2 \\
        x_2 &x_1 \\
        x_2 &x_3 \\
        x_3 &x_2 \\
        \vdots &\vdots\\
        x_k &x_1 \\
        x_1 &x_k
\end{bmatrix} 
=
\begin{bmatrix}
        y_1 &y_2 \\
        y_2 &y_1 \\
        y_2 &y_3 \\
        y_3 &y_2 \\
        \vdots &\vdots\\
        y_k &y_1 \\
        y_1 &y_k
\end{bmatrix}.
\end{equation}
Hence,
$x_1 = y_1, \; x_2 = y_2, \; \dots , x_k = y_k, \;$ Therefore, $x = y$, and thus, $\rho$ is injective.

\textbf{Step 2:} $\rho$ is equivariant function, i.e., for any $h \in \localFDihedral{k}$, we have $\rho(h \cdot x) = g \cdot \rho(x)$ for some $g \in \localFSymmetric{2k}$.

\textbf{Step 3:} Suppose $g \cdot \rho(x) \in \operatorname{Im}(\rho)$ for some $g \in S_k$, then $g \cdot \rho(x)=\rho(h \cdot x)$ for some $h \in D_{2 k}$. 

\underline{Case 1}: If $(g \cdot \rho(x))[1]=\rho(x)[2 u-1] \hspace{0.3em} \text{for} \hspace{0.3em} u \in[k]$, then using the definition of $\rho(x)$ and since $g \cdot \rho(x) \in \operatorname{Im}(\rho)$, we get that, 
\begin{equation*}
g \cdot \rho(x)[3,1] = g \cdot \rho(x)[1,2].
\end{equation*}
    
Thus,
\begin{equation*}
g \cdot \rho(x) = \begin{bmatrix} 
    x_u &x_{\tau(u)}    \\
    x_{\tau(u)} &x_u \\
    x_{\tau(u)} &* \\
    * &x_{\tau(u)} \\
    \quad \quad \vdots &  \\
    * &x_u \\
    x_u &*
    \end{bmatrix},
\end{equation*}
where the `*' symbols represent values that we will discover next.

The uniqueness of $x_i$'s (i.e., we exclude the set $E$ from the input domain so that each of the $x_i$'s are unique)  leads to 
    $g \cdot \rho(x)[3] = [x_{\tau(u)} \quad \quad x_{\tau^2(u)}]$  (since, $ g \cdot \rho(x) [2] = [x_{\tau(u)} \quad \quad x_u] $). Thus, $g \cdot \rho(x)[4] = [ x_{\tau^2(u)} \quad \quad x_{\tau(u)}]$ and, 
    \begin{equation*}
    g \cdot \rho(x) = \begin{bmatrix} 
    x_u &x_{\tau(u)}    \\
    x_{\tau(u)} &x_u \\
    x_{\tau(u)} &x_{\tau^2(u)} \\
    x_{\tau^2(u)} &x_{\tau(u)} \\
    \quad \quad \vdots &  \\
    * &x_u \\
    x_u &* 
    \end{bmatrix}. 
    \end{equation*}

Continuing this process, we get,
\begin{equation*}
g \cdot \rho(x) = \begin{bmatrix} 
    x_u &x_{\tau(u)}    \\
    x_{\tau(u)} &x_u \\
    x_{\tau(u)} &x_{\tau^2(u)} \\
    x_{\tau^2(u)} &x_{\tau(u)} \\
    \quad \quad \vdots &  \\
    x_{\tau^{k-1}(u)} &x_u \\
    x_u &x_{\tau^{k-1}(u)} 
    \end{bmatrix}.
\end{equation*}

Thus, $g \cdot \rho(x) = \rho([x_u, x_{\tau(u)}, x_{\tau^2(u)}, \dots x_{\tau^{k-1}(u)}]^T) = \rho(h \cdot x)$ for some $h \in \localFCyclic{k}$, then, $ h \in \localFDihedral{k}$.
$$$$
\underline{Case 2}: If $(g \cdot \rho(x))[1]=\rho(x)[2 u]$ with $ u \in[k]$, then we get,

\begin{equation*}
g \cdot \rho(x) = \begin{bmatrix} 
    x_{\tau(u)} &x_u     \\
    x_u &x_{\tau(u)}     \\
    x_u &x_{\tau^{k-1}(u)}  \\
    x_{\tau^{k-1}(u)} &x_u \\
    \quad \quad \vdots &  \\
    x_{\tau^2(u)} &x_{\tau(u)} \\
    x_{\tau(u)} &x_{\tau^2(u)} 
    \end{bmatrix}.
\end{equation*}

Thus, we obtain that: 
\begin{itemize}
    \item $g \cdot \rho(x) = \rho([x_{\tau(u)}, x_u, x_{\tau^{k-1}(u)}, \dots x_{\tau^2(u)}]^T)$.
    \item $g \cdot \rho(x) = \rho(\Tilde{h} \cdot x) $ where $\Tilde{h} \in \localFDihedral{k} \setminus \localFCyclic{k}$ (i.e., reflection around the center followed by a cyclic shift).
\end{itemize}

To summarize, we now have the following:
\begin{itemize}
    \item For any $ h \in \localFDihedral{k}$,  $\rho(h \cdot x) = g \cdot \rho(x)$
    for some $g \in \localFSymmetric{2k}$ (from step 2).
    \item For any $g \in \localFSymmetric{2k}$, such that $g \cdot \rho(x) \in Im(\rho)$ (i.e., $g \cdot \rho(x) = \rho(y)$ for some $y \in X$),
    \begin{equation*}
        g \cdot \rho(x) = \rho(h \cdot x), \hspace{0.2em} \text{for some $h \in \localFDihedral{k}$}, \quad (\text{from step 3})
    \end{equation*}
\end{itemize}

Using this, we can show the mapping of orbits as discussed in Step 4 of the proof in Theorem \ref{Z_k:S_k}.
\end{proof}

\subsection{Exclusion of the Set $E$}

As stated in the problem statement, the input domain is defined as $X = [0, 1]^n \setminus E$, representing the input (instance) domain. Here, $E = \{[x_1, x_2, \ldots, x_n]^T \in [0, 1]^n : x_i = x_j \text{ for some } i, j \in [n] \text{ with } i \neq j \}$. The exclusion of this set is necessary for cases involving $\localDihedral$ invariance, which is an integral part of the overall framework. It should be noted that this exclusion is not required for $\localCyclic$-invariance or $\localSymmetric$-invariance.

The significance of excluding $E$ in the context of $\localDihedral$-invariance is illustrated by the following example:

\begin{equation}
    x = 
    \begin{bmatrix}
        1 \\
        2 \\
        3 \\
        1 \\
        4 \\
        5 \\
        6 \\
        7 \\
        8 \\
        9 \\
        10 \\
        11
    \end{bmatrix}
    \xmapsto[]{\rho} 
    \rho(x) = \begin{bmatrix}
        1 & 2\\
        2 & 1\\
        2 & 3\\
        3 & 2\\
        3 & 1\\
        1 & 3\\
        1 & 4\\
        4 & 1\\
        4 & 5\\
        5 & 4\\
        5 & 6\\
        6 & 5\\
        6 & 7\\
        7 & 6\\
        7 & 8\\
        8 & 7\\
        8 & 9\\
        9 & 8\\
        9 & 10\\
        10 & 9\\
        10 & 11\\
        11 & 10\\
        11 & 1\\
        1 & 11
    \end{bmatrix}
    \xmapsto[]{g}
    g \cdot \rho(x) =
    \begin{bmatrix}
        2 & 1\\
        1 & 2\\
        1 & 4\\
        4 & 1\\
        4 & 5\\
        5 & 4\\
        5 & 6\\
        6 & 5\\
        6 & 7\\
        7 & 6\\
        7 & 8\\
        8 & 7\\
        8 & 9\\
        9 & 8\\
        9 & 10\\
        10 & 9\\
        10 & 11\\
        11 & 10\\
        11 & 1\\
        1 & 11\\
        1 & 3\\
        3 & 1\\
        3 & 2\\
        2 & 3
    \end{bmatrix} 
    \xmapsto[]{\rho^{-1}}
    \begin{bmatrix}
        2 \\
        1 \\
        4 \\
        5 \\
        6 \\
        7 \\
        8 \\
        9 \\
        10 \\
        11 \\
        1 \\
        3
    \end{bmatrix} = y,
\end{equation}

where $g \in S_{24}$. Here, $y = \Tilde{h} \cdot x$ for some permutation $\Tilde{h}$, but $\Tilde{h} \notin D_{24}$. It is important to note that the elements $x_i$ are not unique (in this example, the value '1' is repeated twice), indicating that $x \in E$.

\subsection{Proof of Theorem \ref{Main framework}}
\textit{Proof}. 
We will prove the result for $\localCyclic$-invariant function (part (b)). Similar steps hold for other variants. As stated in Theorem. \ref{Z_k:S_k}, any $\localFCyclic{k}$-invariant function $\psi$ can be written as a composition of an $\localFSymmetric{k}$-invariant function and a specific non-linear function which is defined in \eqref{rho def Z_k}. If we apply canonical form for $\localFSymmetric{k}$-invariant function as given by \cite{zaheer2017deep}, we get,
\begin{equation}
    \psi(x) = f_1 \left( \sum\limits_{i \in  [k]} f_2 \left(x_i, x_{\tau(i)}\right) \right),
    \label{Canonical Zk v1}
\end{equation}
for some functions $f_1$ and $f_2$.

Similarly any $\localCyclic$-invariant function $\psi$ can be written as (\cite{karjol2023neural}),
\begin{equation}
    \psi(x) = f_3 \left( \sum\limits_{i \in  \mathcal{I}} f_4 \left(x_i,  x_{\tau(i)}\right), Q \right),
    \label{eq:Canonical Form comp 1}
\end{equation}
for some functions $f_3$ and $f_4$, where $Q = (I - M_1) [x \quad \bf{0}]$.

Thus, the goal is show that, the function  
$$
x \mapsto \phi \left(\left[\begin{array}{l}
\left(M_2 \circ \rho \circ M_1\right)(x) \\
\left(I-M_1\right) \left([x \quad  \bf{0}]\right) 
\end{array}\right]\right)
$$
has an equivalent form, for appropriately chosen $M_1$ and $M_2$. With  $M_1$ chosen as in \eqref{common M1}, we get,
\begin{equation}
    \left(M_1 x \right)[i]=\left\{\begin{array}{l}
x_i \\
0
\end{array} \quad \text { if } i \in I\right.
\end{equation}

Then applying the function $\rho$, we get that
$\left\{(x_i, x_j) \mid i, j \in \mathcal{I}, \; i \neq j \right\}$ will be the set of non-zero elements of the vector $\left(\rho \circ M_1\right)(x)$.

If we choose $M_2$ as stated in \eqref{eq:M2_Zk} for $\localCyclic$-invariant function, we obtain that
$\left\{(x_i, x_{\tau(i)}) \mid i \in \mathcal{I} \right\}$ will be the set of non-zero elements of the vector $\left( M_2 \circ \rho \circ M_1\right)(x)$. Then, applying canonical form for $\localFSymmetric{n^2}$-invariant function as given by \cite{zaheer2017deep}, we get,
\begin{equation}
    \phi \left(\left[\begin{array}{l}
\left(M_2 \circ \rho \circ M_1\right)(x) \\
\left(I-M_1\right) \left([x \quad  \bf{0}]\right) 
\end{array}\right]\right) =  f_5\left(\sum_{i \in \mathcal{I}} f_6\left(x_i, x_{\tau(i)}\right) + L f_4(0, 0), Q\right), 
    \label{eq:Canonical Form comp 2}
\end{equation}
where  $L$ is constant and $f_5$ and $f_6$ are some functions.  We observe that \eqref{eq:Canonical Form comp 1} and \eqref{eq:Canonical Form comp 2} have an equivalent form up to a bias term, which can subsumed in $f_3$ and $f_4$. Thus, we conclude that any $\localCyclic$-invariant function can be represented as a function  of the form, $
x \mapsto \phi \left(\left[\begin{array}{l}
\left(M_2 \circ \rho \circ M_1\right)(x) \\
\left(I-M_1\right) \left([x \quad  \bf{0}]\right) 
\end{array}\right]\right).
$

\subsection{Proof of Lemma \ref{diffeomorphism}}
\begin{proof}
To prove the claim, we need to endow $Y = \rho(X)$ with a topology. First, we observe that, for any $y = \begin{bmatrix}
     y_1 &y_2 \\
     y_2 &y_3 \\
    \vdots &\vdots\\
     y_k  &y_1 
\end{bmatrix}$, can be written as a vector of the form $\left[y_1, y_2, y_2, y_3, y_3, \dots y_k, y_k, y_1 \right]^T \in \mathbb{R}^{2k}$. Thus we can employ subspace topology of the standard topology of $\mathbb{R}^{2k}$. It is obvious to see that $\rho$ is bijective with $\rho^{-1}$ defined as:

\begin{equation*}
    \begin{bmatrix}
     y_1 &y_2 \\
     y_2 &y_3 \\
    \vdots &\vdots\\
     y_k  &y_1 
\end{bmatrix} \mapsto \begin{bmatrix}
     y_1  \\
     y_2  \\
    \vdots \\
     y_k   
\end{bmatrix}
\end{equation*}
Thus, since $\rho$ and $\rho^{-1}$ are smooth with respect to the subspace topology, $\rho$ is a diffeomorphism.
\end{proof}

\subsection{Proof of Theorem \ref{Product group:S_l}}
\thmProductGroups*
\begin{proof}

Upon an analysis of different components of $\rho(x)$ corresponding to various component groups, it becomes evident that $\rho$ is both injective and equivariant. Next, we  need to establish the orbit mapping, similar to the proofs provided in Theorem \ref{Z_k:S_k} and Theorem \ref{D2k_S2k}.

Since, $\rho$ is  equivariant, it is sufficient to prove that,  for any $g \in \localFSymmetric{l}$ such that $g \cdot \rho(x) \in \operatorname{Im}(\rho)$ (i.e., $g \cdot \rho(x) = \rho(y)$ for some $y \in X$), we have:

\begin{equation}
    g \cdot \rho(x) = \rho(h \cdot x) \label{eq:equivalent_claim_product_groups}
\end{equation}

for some $h \in G$.

Now, we proceed to show that permutations occur solely within the component groups. To do this, let's assume $g \cdot \rho(x) \in \operatorname{Im}(\rho)$. Then, we can express it as:

\begin{equation}
    g \cdot \rho(x) = \left(g_1 \cdot  \left(\rho(x)[1:k_1]\right), \quad g_2 \cdot \left(\rho(x)[k_1+1:k_2]\right) \quad \dots \quad g_L \cdot \left(\rho(x)[u:l]\right) \right) \label{eq:component_wise}
\end{equation}

Here, $\rho(x)[i_1:i_2]$ represents a portion of the vector $\rho(x)$ corresponding to a component group $G_i$. We'll now analyze the effects of the permutations on elements associated with different component subgroups $G_i$.

Without loss of generality, let $G_1 = D_{\mathcal{I}_1}$, where $|\mathcal{I}_1|$ is the largest cardinality among component groups of type $\localDihedral$.

Consider the first element of $g \cdot \rho(x).$

Suppose $g \cdot \rho(x)[1] = \rho(x)[u] = [x_i \quad x_j]$ for some $x_i$ and $x_j$:

\begin{align}
    \text{Suppose, } g \cdot \rho(x)[1] &= [x_i \quad x_j] \nonumber \\
    \implies g \cdot \rho(x)[2] &= [x_j \quad x_i] \nonumber
\end{align}

This implies that $\rho(x)[u]$ corresponds to some dihedral group $D_{\mathcal{I}'}$. Continuing the analysis as done in step 3 of the proof of Theorem \ref{D2k_S2k}, we arrive at:

\begin{equation}
    \begin{bmatrix}
        x_i & x_j \\
        x_j & x_i \\
        x_j & x_l \\
        x_l & x_j \\
        \vdots & \vdots \\
        x_p & x_i \\
        x_i & x_l
    \end{bmatrix}
\end{equation}

We now have $2|\mathcal{I}_1|$ elements corresponding to a dihedral group. However, $|\mathcal{I}_1|$ is the largest cardinality among dihedral groups, and no two component subgroups are isomorphic. Hence, we conclude:

\begin{equation}
    \mathcal{I}_1 = \mathcal{I}' \nonumber
\end{equation}

Consequently, the permutations occur within the dihedral component, and we have:

\begin{equation}
    \left(g \cdot \rho(x) \right)[1:k_1] = g_1 \cdot \left( \rho(x)[1:k_1] \right), \quad \text{for some } g_1 \in G_1 \nonumber
\end{equation}

Next, we consider the second-largest dihedral component and continue the analysis. Similarly, we can apply the same reasoning for groups of the type $\localCyclic$ and $\localSymmetric$. This confirms the assertion presented in equation \eqref{eq:component_wise}. Furthermore, based on Theorem \ref{Z_k:S_k}, Theorem \ref{D2k_S2k}, and similar results for $\localFSymmetric{k}$, we obtain:

\begin{align}
    g \cdot \rho(x) &= \left(g_1 \cdot \rho(x)[1:k_1], \quad g_2 \cdot \rho(x)[k_1+1:k_2] \quad \dots \quad g_L \cdot \rho(x)[u:l]\right) \nonumber \\
    &= \left( \rho(h_1 \cdot x[1:l_1]), \quad \rho(h_2 \cdot x[l_1+1:l_2]) \quad \dots \quad \rho(h_L \cdot x[l_{L-1}:n]) \right) \nonumber \\
    &= \rho(h \cdot x) \nonumber
\end{align}

for some $h = (h_1, h_2, \dots, h_L) \in G$ and appropriately chosen $l_1, l_2, \dots, l_{L-1}$. This aligns with the claim presented in equation \eqref{eq:equivalent_claim_product_groups}.

\end{proof}


\subsection{Proof of Theorem \ref{thm:TS}}
\thmTS*
\begin{proof}
    Let $\Delta_a = \max_{\tilde{a} \in \mathcal{A}} \tilde{a}^{\top} \mu^\star - a^{\top} \mu^\star$ denote the gap in expected reward of an arm $a \in \mathcal{A}$, and let $a^\star$ be the optimal arm (thus $\Delta_{a^\star} = 0$). Let us define the LinTS algorithm's {\em cumulative} regret over $T$ rounds as $R_T = \sum_{a \in \mathcal{A}} \Delta_a \mathbb{E}\left[ N_T(a) \right]$, where $N_T(a) = \sum_{t=1}^T \mathbf{1} \left\{ a^{(t)} = a\right\} $ denotes the total number of times action $a$ is played in the time horizon $1, 2, \ldots, T$, and its {\em simple} regret for the guessed best arm after $T$ rounds as $R_T^{\text{simp}} = \mathbb{E}\left[ \Delta_{A_T} \right]$.

    By a standard result \cite[Prop. 33.2]{lattimore2020bandit} relating the simple regret to the cumulative regret, when the guessed arm $A_T$ is drawn according to the empirical distribution of plays as hypothesized, we have 
    \begin{equation}
    \label{eq:simp_cumu_reg}
        R_T^{\text{simp}} = \frac{R_T}{T}.   
    \end{equation}

    We can also bound the simple regret from below as 
    \begin{equation}
        \label{eq:lb_simp_reg}
        R_T^{\text{simp}} \geq \Delta_{\min} \, \mathbb{P}\left[ A_T \neq a^\star \right],
    \end{equation}
    where $\Delta_{\min} = \min \{\Delta_a : a \in \mathcal{A}, \Delta_a > 0 \}$ denotes the gap between the highest and second-highest expected reward across the arms.

    It is also separately known \cite[Thm. 3]{tsuchiya2020analysis} that the cumulative regret of LinTS for a finite action set admits the upper bound
    \begin{equation}
        \label{eq:linTS_reg}
        R_T \leq \kappa \log(T),
    \end{equation}
    where $\kappa \equiv \kappa \left(\mathcal{A}, \mu^\star, \nu \right)$ is a quantity depending on the actions $\mathcal{A}$, true parameter $\mu^\star$ and algorithm parameter $\nu$. Putting together \eqref{eq:simp_cumu_reg}, \eqref{eq:lb_simp_reg} and \eqref{eq:linTS_reg}, we obtain
    \[ \mathbb{P}\left[ A_T \neq a^\star \right] \leq \frac{\kappa \log(T)}{T \Delta_{\min}} \equiv \frac{c \log(T)}{T },\]
    with $c = \frac{\kappa}{\Delta_{\min}}$, in the form as claimed. 
\end{proof}

\subsection{Proof of Theorem \ref{Z_k:S_k linear non-realisability}}
\thmlinearZkSk*
\begin{proof}
Consider a $\localFCyclic{k}$-invariant function $\psi$ defined as follows:
\begin{equation}
    \psi(x) \neq \psi(y) \text{ if } y \notin \mathcal{O}_{\localFCyclic{k}}(x).
\end{equation}
In other words, the above-defined function assigns a unique value to each orbit. Suppose $\psi = \phi \circ M$ for some $\localFSymmetric{k}$-invariant function $\phi$ and some linear transformation $M$. Since each orbit $\mathcal{O}_{\localFCyclic{k}}(x)$ has a unique value and $\big|\mathcal{O}_{\localFCyclic{k}}(x)\big| \leq k$, we have
\begin{equation}
    \big|\psi^{-1}\left(\{c\}\right)\big| \leq k \quad \text{for any } c \in \text{Im}(\psi).
    \label{Ineq1}
\end{equation}
The linear transformation $M$ has a trivial null space, indicating that it has full rank and is bijective. Let $z \in \text{Im}(M)$ be such that all of its individual scalar components are unique. Such a vector exists in $\text{Im}(M)$ because $M$ is full rank, i.e.,
$$Mx = z$$
for some $x \in \mathbb{R}^k$. Then,
\begin{equation}
    \Big|\mathcal{O}_{\localFSymmetric{k}}(z)\Big| = k!.
\end{equation}
Since $k \geq 3$, we have $k! > k$. Thus, from \eqref{Ineq1}, we can see that this leads to a contradiction.
\end{proof}

\newpage
\bibliography{main}

\begin{thebibliography}{}

\bibitem[Agrawal and Goyal, 2013]{agrawal2013thompson}
Agrawal, S. and Goyal, N. (2013).
\newblock Thompson sampling for contextual bandits with linear payoffs.
\newblock In {\em International conference on machine learning}, pages 127--135. PMLR.

\bibitem[Anselmi et~al., 2019]{anselmi2019symmetry}
Anselmi, F., Evangelopoulos, G., Rosasco, L., and Poggio, T. (2019).
\newblock Symmetry-adapted representation learning.
\newblock {\em Pattern Recognition}, 86:201--208.

\bibitem[Benton et~al., 2020]{benton2020learning}
Benton, G., Finzi, M., Izmailov, P., and Wilson, A.~G. (2020).
\newblock Learning invariances in neural networks.

\bibitem[Bronstein et~al., 2021]{bronstein2021geometric}
Bronstein, M.~M., Bruna, J., Cohen, T., and Veli{\v{c}}kovi{\'c}, P. (2021).
\newblock Geometric deep learning: Grids, groups, graphs, geodesics, and gauges.
\newblock {\em arXiv preprint arXiv:2104.13478}.

\bibitem[Cohen and Welling, 2016]{cohen2016group}
Cohen, T. and Welling, M. (2016).
\newblock Group equivariant convolutional networks.
\newblock In {\em International conference on machine learning}, pages 2990--2999. PMLR.

\bibitem[Cohen et~al., 2019]{cohen2019general}
Cohen, T.~S., Geiger, M., and Weiler, M. (2019).
\newblock A general theory of equivariant cnns on homogeneous spaces.
\newblock {\em Advances in neural information processing systems}, 32.

\bibitem[Dehmamy et~al., 2021]{dehmamy2021automatic}
Dehmamy, N., Walters, R., Liu, Y., Wang, D., and Yu, R. (2021).
\newblock Automatic symmetry discovery with lie algebra convolutional network.
\newblock {\em Advances in Neural Information Processing Systems}, 34:2503--2515.

\bibitem[Dummit and Foote, 2004]{dummit2004abstract}
Dummit, D.~S. and Foote, R.~M. (2004).
\newblock {\em Abstract algebra}, volume~3.
\newblock Wiley Hoboken.

\bibitem[Ensign et~al., 2020]{ensign2020complexity}
Ensign, D., Neville, S., Paul, A., and Venkatasubramanian, S. (2020).
\newblock The complexity of explaining neural networks through (group) invariants.
\newblock {\em Theoretical Computer Science}, 808:74--85.

\bibitem[Felzenszwalb et~al., 2009]{felzenszwalb2009object}
Felzenszwalb, P.~F., Girshick, R.~B., McAllester, D., and Ramanan, D. (2009).
\newblock Object detection with discriminatively trained part-based models.
\newblock {\em IEEE transactions on pattern analysis and machine intelligence}, 32(9):1627--1645.

\bibitem[Fiez et~al., 2019]{fiez2019sequential}
Fiez, T., Jain, L., Jamieson, K.~G., and Ratliff, L. (2019).
\newblock Sequential experimental design for transductive linear bandits.
\newblock {\em Advances in neural information processing systems}, 32.

\bibitem[Hruschka et~al., 2009]{hruschka2009survey}
Hruschka, E.~R., Campello, R.~J., Freitas, A.~A., et~al. (2009).
\newblock A survey of evolutionary algorithms for clustering.
\newblock {\em IEEE Transactions on Systems, Man, and Cybernetics, Part C (Applications and Reviews)}, 39(2):133--155.

\bibitem[Karjol et~al., 2023]{karjol2023neural}
Karjol, P., Kashyap, R., and Prathosh, A. (2023).
\newblock Neural discovery of permutation subgroups.
\newblock In {\em International Conference on Artificial Intelligence and Statistics}, pages 4668--4678. PMLR.

\bibitem[Kicki et~al., 2020]{kicki2020computationally}
Kicki, P., Ozay, M., and Skrzypczy{\'n}ski, P. (2020).
\newblock A computationally efficient neural network invariant to the action of symmetry subgroups.
\newblock {\em arXiv preprint arXiv:2002.07528}.

\bibitem[Kondor and Trivedi, 2018]{kondor2018generalization}
Kondor, R. and Trivedi, S. (2018).
\newblock On the generalization of equivariance and convolution in neural networks to the action of compact groups.
\newblock In {\em International Conference on Machine Learning}, pages 2747--2755. PMLR.

\bibitem[Lattimore and Szepesv{\'a}ri, 2020]{lattimore2020bandit}
Lattimore, T. and Szepesv{\'a}ri, C. (2020).
\newblock {\em Bandit algorithms}.
\newblock Cambridge University Press.

\bibitem[Lazebnik et~al., 2004]{lazebnik2004semi}
Lazebnik, S., Schmid, C., and Ponce, J. (2004).
\newblock Semi-local affine parts for object recognition.
\newblock In {\em British Machine Vision Conference (BMVC'04)}, pages 779--788. The British Machine Vision Association (BMVA).

\bibitem[LeCun et~al., 1995]{lecun1995convolutional}
LeCun, Y., Bengio, Y., et~al. (1995).
\newblock Convolutional networks for images, speech, and time series.
\newblock {\em The handbook of brain theory and neural networks}, 3361(10):1995.

\bibitem[Loosli et~al., 2007]{loosli2007training}
Loosli, G., Canu, S., and Bottou, L. (2007).
\newblock Training invariant support vector machines using selective sampling.
\newblock {\em Large scale kernel machines}, 2.

\bibitem[Rutenbar, 1989]{rutenbar1989simulated}
Rutenbar, R.~A. (1989).
\newblock Simulated annealing algorithms: An overview.
\newblock {\em IEEE Circuits and Devices magazine}, 5(1):19--26.

\bibitem[Shahriari et~al., 2015]{shahriari2015taking}
Shahriari, B., Swersky, K., Wang, Z., Adams, R.~P., and De~Freitas, N. (2015).
\newblock Taking the human out of the loop: A review of bayesian optimization.
\newblock {\em Proceedings of the IEEE}, 104(1):148--175.

\bibitem[Tsuchiya et~al., 2020]{tsuchiya2020analysis}
Tsuchiya, T., Honda, J., and Sugiyama, M. (2020).
\newblock Analysis and design of thompson sampling for stochastic partial monitoring.
\newblock {\em Advances in Neural Information Processing Systems}, 33:8861--8871.

\bibitem[Yang et~al., 2023]{yang2023generative}
Yang, J., Walters, R., Dehmamy, N., and Yu, R. (2023).
\newblock Generative adversarial symmetry discovery.

\bibitem[Zaheer et~al., 2017]{zaheer2017deep}
Zaheer, M., Kottur, S., Ravanbakhsh, S., Poczos, B., Salakhutdinov, R.~R., and Smola, A.~J. (2017).
\newblock Deep sets.
\newblock {\em Advances in neural information processing systems}, 30.

\bibitem[Zhou et~al., 2020]{zhou2020meta}
Zhou, A., Knowles, T., and Finn, C. (2020).
\newblock Meta-learning symmetries by reparameterization.
\newblock {\em arXiv preprint arXiv:2007.02933}.

\end{thebibliography}
\end{document}